\documentclass{article}
\pdfoutput=1 

\usepackage{microtype}
\usepackage{graphicx}
\usepackage{subfigure}
\usepackage{booktabs} 

\usepackage{multirow}
\usepackage{hyperref}



\usepackage[accepted]{icml2018}

\icmltitlerunning{A probabilistic framework for multi-view feature learning with many-to-many associations via neural networks}

\usepackage{amsmath,amssymb}
\usepackage{algorithmic,algorithm}
\usepackage{subfigure}
\usepackage{multirow}
\usepackage{comment}
\usepackage{enumerate}
\usepackage{wrapfig}
\usepackage{ascmac}

\usepackage{mdframed}

\usepackage{titling}
\usepackage{natbib}

\usepackage{framed}

\newcommand{\bs}{\boldsymbol}
\newcommand{\x}{\bs x}

\newcommand{\y}{\bs y}

\newcommand{\W}{\bs W}

\newcommand{\tr}{\mathrm{tr}}

\newcommand{\ave}{E}

\newcommand{\argmax}{\mathop{\arg\max}}
\newcommand{\arginf}{\mathop{\arg\inf}}
\newcommand{\argsup}{\mathop{\arg\sup}}

\def\qed{\hfill $\Box$} 

\newtheorem{proof}{Proof}[section]

\allowdisplaybreaks[1]

\begin{document}

\twocolumn[
\icmltitle{A probabilistic framework for multi-view feature learning \\ with many-to-many associations via neural networks}



\icmlsetsymbol{equal}{*}

\begin{icmlauthorlist}
\icmlauthor{Akifumi Okuno}{kyoto,riken}
\icmlauthor{Tetsuya Hada}{recruit}
\icmlauthor{Hidetoshi Shimodaira}{kyoto,riken}
\end{icmlauthorlist}

\icmlaffiliation{kyoto}{Graduate School of Informatics, Kyoto University, Kyoto, Japan}
\icmlaffiliation{riken}{RIKEN Center for Advanced Intelligence Project~(AIP), Tokyo, Japan}
\icmlaffiliation{recruit}{Recruit Technologies Co., Ltd., Tokyo, Japan}

\icmlcorrespondingauthor{Akifumi Okuno}{okuno@sys.i.kyoto-u.ac.jp}

\icmlkeywords{Multi-view, Graph embedding, Neural networks}

\vskip 0.3in
]



\newtheorem{theo}{Theorem}[section]
\newtheorem{defi}{Definition}[section]
\newtheorem{lemm}{Lemma}[section]
\newtheorem{prop}{Proposition}[section]

\printAffiliationsAndNotice{}  

\begin{abstract}
A simple framework Probabilistic Multi-view Graph Embedding~(PMvGE) is proposed for multi-view feature learning with many-to-many associations so that it generalizes various existing multi-view methods. 
PMvGE is a probabilistic model for predicting new associations via graph embedding of the nodes of data vectors with links of their associations.
Multi-view data vectors with many-to-many associations are transformed by neural networks to feature vectors in a shared space, and the probability of new association between two data vectors is modeled by the inner product of their feature vectors. 
While existing multi-view feature learning techniques can treat only either of many-to-many association or non-linear transformation, PMvGE can treat both simultaneously.  
By combining Mercer's theorem and the universal approximation theorem, we prove that PMvGE learns a wide class of similarity measures across views.
Our likelihood-based estimator enables efficient computation of non-linear transformations of data vectors in large-scale datasets by minibatch SGD, and numerical experiments illustrate that PMvGE outperforms existing multi-view methods.
\end{abstract}

\begin{table}[htbp]
\centering
\caption{Comparison of PMvGE with existing multi-view / graph-embedding methods.
(Nv): Number of views,
(MM): Many-to-many, (NL): Non-linear, (Ind): Inductive, (Lik): Likelihood-based. PMvGE has all the properties. $\text{Nv}=D$ represents that the method can deal with arbitrary number of views.}
\label{table:comparison}
\scalebox{0.9}{
\begin{tabular}{l|ccccc}
\hline
 & (Nv) & (MM) & (NL) & (Ind) & (Lik) \\
\hline
CCA & 2 &  & &  \checkmark & \\
DCCA & 2 & & \checkmark & \checkmark &  \\
MCCA & $D$ &  & &  \checkmark & \\
SGE & 0 & \checkmark & &  & \\
LINE & 0 & \checkmark & &  & \checkmark\\
LPP & 1 & \checkmark & & \checkmark & \\
CvGE & 2 & \checkmark & &  \checkmark &\\
CDMCA & $D$ & \checkmark & &  \checkmark &\\
DeepWalk & 0 & \checkmark  &  & & \checkmark \\
SBM & 1 & \checkmark &  & \checkmark & \checkmark \\
GCN & 1 & \checkmark & \checkmark &  &\checkmark\\
GraphSAGE & 1 & \checkmark & \checkmark & \checkmark & \checkmark \\
IDW & 1 & \checkmark & \checkmark & \checkmark & \checkmark \\
\textbf{PMvGE} & $D$ & \checkmark & \checkmark & \checkmark & \checkmark\\
\hline
\end{tabular}
}
\vspace{-2em}
\end{table}

\section{Introduction}

With the rapid development of Internet communication tools in past few decades, many different types of data vectors become easily obtainable these days,
advancing the development of multi-view data analysis methods~\citep{sun2013survey,zhao2017multi}.
Different types of vectors are called as ``views", and their dimensions may be different depending on the view. 
Typical examples are data vectors of images, text tags, and user attributes available in Social Networking Services~(SNS).
However, we cannot apply standard data analysis methods, such as clustering, to multi-view data vectors, because data vectors from different views, say, images and text tags, are not directly compared with each other.
In this paper, we work on multi-view Feature Learning for transforming the data vectors from all the views into new vector representations called ``feature vectors" in a shared euclidean subspace.

One of the best known approaches to multi-view
feature learning is Canonical Correlation Analysis~\citep[CCA]{hotelling1936relations} for two-views, and Multiset CCA~\citep[MCCA]{kettenring1971canonical} for many views. 
CCA considers pairs of related data vectors $\{(\bs x^{(1)}_i,\bs x^{(2)}_i)\}_{i=1}^{n} \subset \mathbb{R}^{p_1} \times \mathbb{R}^{p_2}$. For instance, $\bs x^{(1)}_i \in \mathbb{R}^{p_1}$ may represent an image, and $\bs x^{(2)}_i \in \mathbb{R}^{p_2}$ may represent a text tag. 
Their dimension $p_1$ and $p_2$ may be different. 
CCA finds linear transformation matrices $\bs A^{(1)},\bs A^{(2)}$ so that the sum of inner products $\sum_{i=1}^{n} \langle \bs A^{(1)\top}\bs x^{(1)}_i,\bs A^{(2)\top}\bs x^2_i \rangle$ is maximized under a variance constraint. 
The obtained linear transformations compute feature vectors $\bs y^{(1)}_i:=\bs A^{(1)\top}\bs x^{(1)}_i,\bs y^{(2)}_i:=\bs A^{(2)\top}\bs x^{(2)}_i \in \mathbb{R}^K$ where $K \leq \min\{p_1,p_2\}$ is the dimension of the shared space of feature vectors from the two views. However, the linear transformations may not capture the underlying structure of real-world datasets due to its simplicity.

To enhance the expressiveness of transformations, CCA has been further extended to non-linear settings, as Kernel CCA~\citep{lai2000kernel} and Deep CCA~\citep[DCCA]{andrew2013deep,wang2016deep} which incorporate kernel methods and neural networks to CCA, respectively. 
These methods show drastic improvements in performance in 
face recognition~\citep{zheng2006facial} 
and image-text embedding~\citep{yan2015deep}. However, these CCA-based approaches are limited to multi-view data vectors with one-to-one correspondence across views.

Real-world datasets often have more complex association structures among the data vectors, thus
the whole dataset is interpreted as a large graph with nodes of data vectors and links of the associations.
For example, associations between images $\{\bs x^{(1)}_i\}_{i=1}^{n_1}$ and their tags $\{\bs x^{(2)}_j \}_{j=1}^{n_2}$ may be many-to-many relationships, in the sense that each image has multiple associated tags as well as each tag has multiple associated images. 
The weight $w_{ij} \geq 0$, which we call ``link weight", is defined to represent the strength of association between data vectors $\bs x^{(1)}_i,\bs x^{(2)}_j$ ($i=1,2,\ldots,n_1;j=1,2,\ldots,n_2$). The number of data vectors $n_1,n_2$ in each view may be different.

To fully utilize the complex associations represented by $\{w_{ij}\}$, Cross-view Graph Embedding~\citep[CvGE]{huang2012cross} and its extension to more than three views called Cross-Domain Matching Correlation Analysis~\citep[CDMCA]{shimodaira2016cross} are proposed recently, by extending CCA to many-to-many settings. 
2-view CDMCA ($=\,$CvGE) obtains linear transformation matrices $\bs A^{(1)},\bs A^{(2)}$ so that the sum of inner products $\sum_{i=1}^{n_1}\sum_{j=1}^{n_2} w_{ij} \langle \bs A^{(1)\top}\bs x^{(1)}_i,\bs A^{(2)\top}\bs x^{(2)}_j \rangle$ is maximized under a variance constraint.

CDMCA includes various existing multi-view / graph-embedding methods as special cases. For instance, 2-view CDMCA obviously includes CCA as a special case $w_{ij}=\delta_{ij}$, where $\delta_{ij}$ is Kronecker's delta. 
By considering $1$-view setting, where $\bs x_i \in \mathbb{R}^{p}$ is a 1-view data vector and $w_{ij} \geq 0$ is a link weight between $\x_i$ and $\x_j$, CDMCA reduces to Locality Preserving Projections~\citep[LPP]{he2003locality,yan2007graph}. 
LPP also reduces to Spectral Graph Embedding~\citep[SGE]{chung1997spectral,belkin2001laplacian}, which is interpreted as ``0-view'' graph embedding, by letting $\bs x_i \in \{0,1\}^n$ be 1-hot vector with 1 at $i$-th entry and 0 otherwise. 


Although CDMCA shows a good performance in word and image embeddings~\citep{oshikiri2016cross,fukui2016image}, its expressiveness is still limited due to its linearity. 
There has been a necessity of a framework, which can deal with many-to-many associations and non-linear transformations simultaneously. 
Therefore, in this paper, we propose a non-linear framework for multi-view feature learning with many-to-many associations. 
We name the framework as Probabilistic Multi-view Graph Embedding~(PMvGE). 
Since PMvGE generalizes CDMCA to non-linear setting, PMvGE can be regarded as a generalization of various existing multi-view methods as well.

PMvGE is built on a simple observation: 
many existing approaches to feature learning consider the inner product similarity of feature vectors. 
For instance,  the objective function of CDMCA is the weighted sum of the inner product $\langle \bs y_i, \bs y_j \rangle$ of the two feature vectors $\bs y_i$ and $\bs y_j$ in the shared space. 
Turning our eyes to recent $1$-view feature learning, Graph Convolutional Network~\citep{kipf2017semi}, GraphSAGE~\citep{hamilton2017inductive}, and Inductive DeepWalk~\citep{dai2018adversarial} assume that the inner product of feature vectors $\langle \bs y_i,\bs y_j \rangle$ approximates link weight $w_{ij} \geq 0$.

Inspired by these existing studies, for $D$-view feature learning ($D\ge1$), PMvGE transforms data vectors $\bs x_i \in \mathbb{R}^{p_{d_i}}$ from view $d_i \in \{1,\ldots, D\}$ by neural networks $\bs x_i \mapsto \bs y_i:=\text{NN}^{(d_i)}(\bs x_i) \in \mathbb{R}^{K} \: (i=1,2,\ldots,n)$ so that the function $\exp(\langle \bs y_i,\bs y_j \rangle)$ approximates the weight $w_{ij}$ for $i,j=1,\ldots,n$. 
We introduce a parametric model of the conditional probability of  $\{w_{ij}\}$ given $\{ (\bs x_i, d_i) \}$, and thus PMvGE is a non-linear and probabilistic extension of CDMCA.
This leads to very efficient computation of the Maximum Likelihood Estimator (MLE) with minibatch SGD.

Our contribution in this paper is summarized as follows: 

(1) We propose PMvGE for multi-view feature learning with many-to-many associations, which is non-linear, efficient to compute, and inductive. Comparison with existing methods is shown in Table~\ref{table:comparison}.
See Section~\ref{sec:related_works} for the description of these methods.

(2) We show in Section~\ref{sec:proposal} that PMvGE generalizes various existing multi-view methods, at least approximately, by considering the Maximum Likelihood Estimator~(MLE) with a novel probabilistic model.

(3) We show in Section~\ref{sec:optimization} that PMvGE with large-scale datasets can be efficiently computed by minibatch SGD.

(4) We prove that 
PMvGE, yet very simple, learns a wide class of similarity measures across views.
By combining Mercer's theorem and the universal approximation theorem, we prove in Section~\ref{subsec:case_neural_network} that
the inner product of feature vectors can approximate arbitrary continuous positive-definite similarity measures
via sufficiently large neural networks.
We also prove in Section~\ref{subsec:consistency_pmvge} that
MLE will actually learn the correct parameter value for sufficiently large number of data vectors.

\section{Related works}

\label{sec:related_works}


\textbf{0-view feature learning:} 
There are several graph embedding methods related to PMvGE without data vectors. 
We call them as $0$-view feature learning methods. 
Given a graph, Spectral Graph Embedding~\citep[SGE]{chung1997spectral,belkin2001laplacian} obtains feature vectors of nodes by considering the adjacency matrix. However, SGE requires time-consuming eigenvector computation due to its variance constraint. 
LINE~\citep{tang2015line} is very similar to 0-view PMvGE, which
reduces the time complexity of SGE by proposing a probabilistic model so that any constraint is not required.
DeepWalk~\citep{perozzi2014deepwalk} and node2vec~\citep{grover2016node2vec} obtain feature vectors of nodes by applying skip-gram model~\citep{mikolov2013distributed} to the node-series computed by random-walk over the given graph, 
while PMvGE directly considers the likelihood function.

\textbf{1-view feature learning:} 
Locality Preserving Projections~\citep[LPP]{he2003locality,yan2007graph} incorporates linear transformation of data vectors into SGE. 
For introducing non-linear transformation,
the graph neural network~\citep[GNN]{scarselli2009graph} defines a graph-based convolutional neural network.
ChebNet~\citep{defferrard2016convolutional} and Graph Convolutional Network~(GCN)~\citep{kipf2017semi} reduce the time complexity of GNN by approximating the convolution. 
These GNN-based approaches are highly expressive but not \textit{inductive}. 
A method is called \textit{inductive} if it computes feature vectors for newly obtained vectors which are not included in the training set. 
GraphSAGE~\citep{hamilton2017inductive} and Inductive DeepWalk~\citep[IDW]{dai2018adversarial} are inductive as well as our proposal PMvGE, but probabilistic models of these methods are different. 

%

\textbf{Multi-view feature learning:} 
HIMFAC~\citep{nori2012multinomial} is mathematically equivalent to CDMCA. 
Factorization Machine~\citep[FM]{rendle2010factorization} incorporates higher-order products of multi-view data vectors to linear-regression. It can be used for link prediction across views. 
If only the second terms in FM are considered, FM is approximately the same as PMvGE with linear transformations. 
However, FM does not include PMvGE with neural networks.

\textbf{Another study} 
Stochastic Block Model~\citep[SBM]{holland1983stochastic,nowicki2001estimation} is a well-known probabilistic model of graphs, whose links are generated with probabilities depending on the cluster memberships of nodes. 
SBM assumes the cluster structure of nodes, while our model does not.

\section{Proposed model and its parameter estimation}
\label{sec:proposal}

\subsection{Preliminaries}
\label{subsec:setup}
We consider an undirected graph consisting of $n$ nodes $\{v_i\}_{i=1}^{n}$ and link weights 
$w_{ij} \geq 0 \quad (i,j=1,2,\ldots,n)$ satisfying $w_{ij}=w_{ji}$ for all $i,j$, and $w_{ii}=0$.
Let $D \in \mathbb{N}$ be the number of views.
For $D$-view feature learning, node $v_i$ belongs to one of views, which we denote as $d_i \in \{1,2,\ldots,D\}$. The data vector representing the attributes (or side-information) at node $v_i$ is denoted as
$
\x_i \in \mathbb{R}^{p_{d_i}}
$
for view $d_i$ with dimension $p_{d_i}$.
For 0-view feature learning, we formally let $D=1$ and use the 1-hot vector $\bs x_i \in \{0,1\}^n$.
We assume that we obtain $\{w_{ij}\}_{i,j=1}^{n},\{\x_i,d_i\}_{i=1}^{n}$ as observations. 
By taking $w_{ij}$ as a random variable, we consider a parametric model of conditional probability of $w_{ij}$ given the data vectors.
In Section~\ref{subsec:probabilistic_model}, we consider the probability model of $w_{ij}$ with the conditional expected value
\[
	\mu_{ij} = E(w_{ij} | \{\x_i, d_i\}_{i=1}^n )
\]
where $\{\bs x_i,d_i\}_{i=1}^{n}$ is given, for all $1 \le i < j \le n$.
In Section~\ref{subsec:proposed_model}, we then define PMvGE by specifying the functional form of $\mu_{ij}$ via feature vectors.


\subsection{Probabilistic model}
\label{subsec:probabilistic_model}


For deriving our probabilistic model, we first consider a random graph model with fixed $n$ nodes.
At each time-point $t=1,2,\ldots,T$, an unordered node pair $(v_i,v_j)$ is chosen randomly with probability
$$
\mathbb{P}(e_t=(v_i,v_j))=\frac{\mu_{i'j'}}{\sum_{1 \leq i < j \leq n} \mu_{ij}} 
$$
where $i':=\min\{i,j\},j':=\max\{i,j\}$ and $e_t$ represents the undirected link at time $t$.
The parameters $\mu_{ij} \geq 0 \quad (1 \leq i < j \leq n)$ are interpreted as unnormalized probabilities of node pairs.
We allow the same pair is sampled several times.
Given independent observations $e_1, e_2, \ldots, e_T$,
we consider the number of links generated between $v_i$ and $v_j$ as
$$
w_{ij}=w_{ji}:= \sharp \{t \in \{1,\ldots, T\} \mid e_t=(v_i,v_j)\}.
$$
The conditional probability $\mathbb{P}(\{w_{ij}\}_{1 \leq i < j \leq n} \mid T)$ follows a multinomial distribution. 
Assuming that $T$ obeys Poisson distribution with mean $\sum_{1 \leq i < j \leq n} \mu_{ij}$, the probability of $\{w_{ij}\}_{1 \leq i < j \leq n}$ follows
$$
\mathbb{P}(\{w_{ij}\}_{1 \leq i < j \leq n})
=
\prod_{1 \leq i < j \leq n} p(w_{ij} ; \mu_{ij})
$$
where $p(w;\mu)$ is the probability function of Poisson distribution with mean $\mu$.
Thus $w_{ij}$ follows Poisson distribution independently as
\begin{align}
w_{ij} \overset{\text{indep.}}{\sim}  \text{Po}(\mu_{ij})
\label{eq:w_model}
\end{align}
for all $1 \leq i < j \leq n$. 
Although $w_{ij}$ should be a non-negative integer as an outcome of Poisson distribution, our likelihood computation allows $w_{ij}$ to take any nonnegative real value.

Our probabilistic model (\ref{eq:w_model}) is nothing but Stochastic Block Model~\citep[SBM]{holland1983stochastic}
by assuming that node $v_i$ belongs to a cluster $c_i \in \{1,2,\ldots,C\}$.
The model is specified as
\begin{align}
\mu_{ij}=\beta^{(c_i,c_j)},\label{eq:sbm_mu_ij}
\end{align}
where $\beta^{(c_i,c_j)}$ is the parameter regulating the number of links whose end-points belong to clusters $c_i$ and $c_j$.
Note that SBM is interpreted as 1-view method with
1-hot vector $\bs x_i \in \{0,1\}^C$ indicating the cluster membership.

\subsection{Proposed model (PMvGE)}
\label{subsec:proposed_model}
Inspired by various existing methods for $0$-view and $1$-view feature learning, we propose a novel model for the parameter $\mu_{ij}$ in eq.~(\ref{eq:w_model}) by using the inner-product similarity as 
\begin{align}
\mu_{ij}(\bs \alpha,\bs \psi)
&:=
\alpha^{(d_i,d_j)}
\exp\left(
	\big \langle 
		\bs y_i,
		\bs y_j
	\big \rangle
\right), \label{eq:mu_ij} \\
\bs y_i&:=f^{(d_i)}_{\bs \psi}(\x_i). \nonumber
\end{align}
Here $\bs \alpha = (\alpha^{(d,e)}) \in \mathbb{R}^{D \times D}_{\geq 0}$ is a symmetric parameter matrix
($\bs \alpha=\bs \alpha^{\top}$) for regulating the sparseness of $\W=(w_{ij})$.
For $\bs y, \bs y' \in \mathbb{R}^K$, $\langle \bs y, \bs y' \rangle = \sum_{i=1}^K y_i y'_i$ is simply the inner product in Euclidean space. 
The functions $f^{(d)}_{\bs \psi}:\mathbb{R}^{p_d} \to \mathbb{R}^{K}$, $d=1,2,\ldots,D$, specify the non-linear transformations from data vectors to feature vectors.
$\bs \psi$ represents a collection of parameters (e.g., neural network weights). 

\begin{figure}[htbp]
\centering
	\includegraphics[width=6cm]{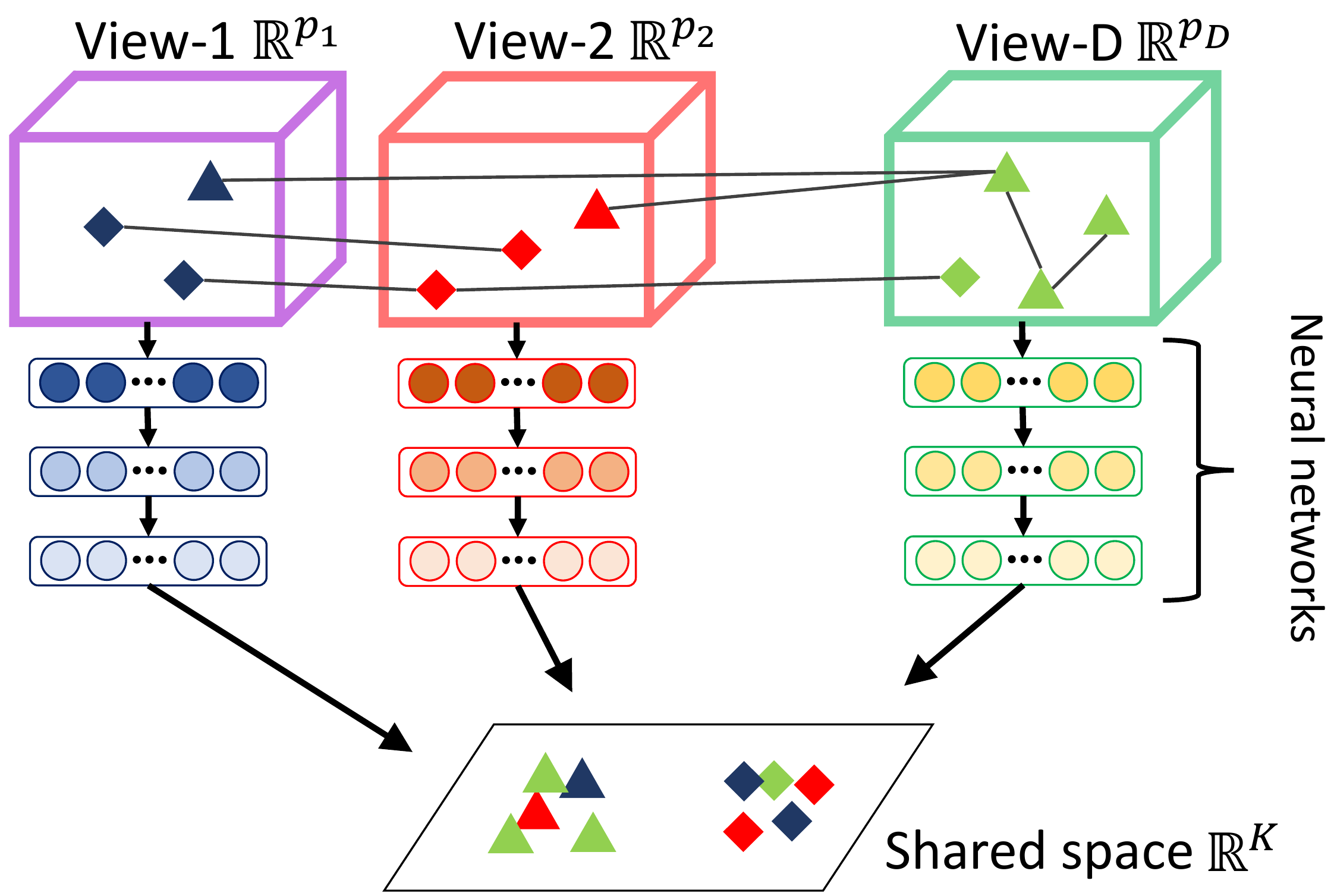}
	\caption{Data vectors $\{\bs x_i\}$ in each view are transformed to feature vectors $\{\bs y_i\}$ by neural networks $\{f^{(d)}_{\bs \psi}\}$.}
	\vspace{-1.5em}
\end{figure}
These transformations $\{f^{(d)}_{\bs \psi}\}$ can be trained by maximizing the likelihood for the probabilistic model (\ref{eq:w_model}) as shown in Section~\ref{subsec:mle}.
PMvGE computes feature vectors $\{\bs y_i\}_{i=1}^{n}$ through maximum likelihood estimation of transformations $\{f^{(d)}_{\bs \psi}\}$.



PMvGE associates nodes $v_i$ and $v_j$ with the probability specified by the similarity between their feature vectors $\y_i:=f^{(d_i)}_{\bs \psi}(\x_i),\y_j:=f^{(d_j)}_{\bs \psi}(\x_j) \in \mathbb{R}^{K}$ in the shared space.
Nodes with similar feature vectors will share many links. 


We consider the following neural network model for the transformation function, while any functional form can be accepted for PMvGE.
Only the inner product of feature vectors is considered for measuring the similarity in PMvGE.
We prove in Theorem~\ref{theo:universal_approximate} that the inner product with neural networks approximates a wide class of similarity measures.

\textbf{Neural Network~(NN)} with 3-layers is defined as 
\begin{align}
f^{(d)}_{\bs \psi}(\x)
=
\bs \sigma(
\bs \psi^{(d)\top}_{3}
\bs \sigma(\bs \psi^{(d)\top}_2
	\bs \sigma(\bs \psi^{(d)\top}_1 \x)
))), 
\label{eq:neural_network}
\end{align}
where $\x \in \mathbb{R}^{p_d}$ is data vector, $\bs \psi^{(d)}_1 \in \mathbb{R}^{p_d \times K^{(d)}_1},\bs \psi^{(d)}_2 \in \mathbb{R}^{K^{(d)}_1 \times K^{(d)}_2},
\bs \psi^{(d)}_3 \in \mathbb{R}^{K^{(d)}_2 \times K}$ are parameter matrices.
The neural network size is specified by $p_d, K_1^{(d)},K_2^{(d)}, K \in \mathbb{N}$ $(d=1,2,\ldots,D)$.
Each element of $\bs \sigma(\x)$ is user-specified activation function $\sigma(\cdot)$. Although we basically consider a multi-layer perceptron (MLP) as $f^{(d)}_{\bs \psi}$, it can be replaced with any deterministic NN with input and output layers, such as recurrent NN and Deep NN~(DNN).

NN model reduces to linear model
\begin{align}
f^{(d)}_{\bs \psi}(\x):=\bs \psi^{(d)\top}\x
\label{eq:linear_transformation}
\end{align}
 $(d=1,2,\ldots,D)$
by applying $\sigma(x)=x$, where $\bs \psi^{(d)} \in \mathbb{R}^{p_d \times K}$ is  parameter matrix.


\subsection{Link weights across some view pairs may be missing}
\label{subsec:view_missing}
Link weights across all the view pairs may not be available in practice.
So we consider the set of unordered view pairs
\begin{align*}
\mathcal{D}&:=\{(d,e) \text{ : Link weights are} 
\\
&\hspace{7em}\text{observed between views $d,e$}\},
\end{align*}
and we formally set $w_{ij}=0$ for the missing $(d_i,d_j) \notin \mathcal{D}$.
For example, $\mathcal{D}=\{(1,2)\}$ for $D=2$ indicates that link weights across view-1 and view-2 are observed while link weights within view-1 or view-2 are missing. 
We should notice the distinction between setting $w_{ij}=0$ with missing and observing $w_{ij}=0$ without missing, because these two cases give different likelihood functions.

\subsection{Maximum Likelihood Estimator}
\label{subsec:mle}

Since PMvGE is specified by (\ref{eq:w_model}) and (\ref{eq:mu_ij}),  the log-likelihood function is given by
\begin{align}
{\ell}_n(\bs \alpha,&\bs \psi) \nonumber\\
:=& 
\sum_{(i,j) \in \mathcal{I}_n} 
[
	w_{ij}\log \mu_{ij}(\bs \alpha,\bs \psi)-\mu_{ij}(\bs \alpha,\bs \psi)
],
\label{eq:log_likelihood}
\end{align}
whose sum is over the set of index pairs $\mathcal{I}_n:=\{(i,j) \mid 1 \leq i < j \leq n,(d_i,d_j) \in \mathcal{D}\}$. 
The Maximum Likelihood Estimator~(MLE) of the parameter $(\bs \alpha,\bs \psi)$ is defined as the maximizer of (\ref{eq:log_likelihood}).
We estimate $(\bs \alpha,\bs \psi)$ by maximizing ${\ell}_n(\bs \alpha,\bs \psi)$ with constraints $\bs \alpha=\bs \alpha^{\top}$ and $\alpha^{(d,e)}=0$ for $(d,e) \notin \mathcal{D}$.

\subsection{PMvGE approximately generalizes various methods for multi-view learning}
\label{subsec:relation}

SBM (\ref{eq:sbm_mu_ij}) is 1-view PMvGE with 1-hot cluster membership vector $\x_i \in \{0,1\}^C$.
Consider the linear model (\ref{eq:linear_transformation}) with 
$\bs \psi^{(1)\top} = (\bs \psi^1,\ldots, \bs \psi^C) \in \mathbb{R}^{K \times C}$, where $\bs \psi^c \in \mathbb{R}^K$ is a feature vector for class $c=1,\ldots,C$.
Then $\mu_{ij} = \alpha^{(1,1)} \exp( \langle \bs \psi^{c_i}, \bs \psi^{c_j} \rangle)$ 
 will specify $\beta^{(c_i,c_j)}$ for sufficiently large $K$, and
 $\{\bs \psi^{c_i}\}$ represent low-dimensional structure of SBM for smaller $K$.
SBM is also interpreted as PMvGE with $D=C$ views by letting $d_i = c_i$ and $f^{(d_i)}_{\bs \psi}(\x)\equiv 0$. Then $\mu_{ij}=\alpha^{(c_i,c_j)}$ is equivalent to SBM.

More generally, CDMCA~\citep{shimodaira2016cross} is approximated by $D$-view PMvGE with the linear transformations (\ref{eq:linear_transformation}).
The first half of the objective function~(\ref{eq:log_likelihood}) becomes
\begin{align}
\frac{1}{2}
\sum_{i=1}^{n}\sum_{j=1}^{n}w_{ij}
\langle \bs \psi^{(d_i)\top}\x_i,\bs \psi^{(d_j)\top}\x_j \rangle
\label{eq:cdmca_objective}
\end{align}
by specifying $\alpha^{(d,e)}\equiv1$.
CDMCA computes the transformation matrices $\{\bs \psi^{(d)}\}$ by maximizing this objective function under a quadratic constraint such as 
\begin{align}
\sum_{i=1}^{n}\sum_{j=1}^{n}w_{ij}\bs \psi^{(d_i)\top}\x_i \x_i^{\top} \bs\psi^{(d_i)}=\bs I.
\label{eq:cdmca_constraint}
\end{align}
The above observation intuitively explains why PMvGE approximates CDMCA; this is more formally discussed in 
Supplement~\ref{sec:optimization_linear_transformation}.
The quadratic constraint is required for preventing the maximizer of (\ref{eq:cdmca_objective}) from being diverged in CDMCA. However, our likelihood approach does not require it, because the last half of~(\ref{eq:log_likelihood}) serves as a regularization term.

\subsection{PMvGE represents neural network classifiers of multiple classes}
\label{subsec:pmvge_classifiers}
For illustrating the generality of PMvGE, here we consider the multi-class classification problem. 
We show that PMvGE includes Feed-Forward Neural Network~(FFNN) classifier as a special case.

Let $(\bs x_i, c_i) \in \mathbb{R}^p \times  \{1,\ldots,C\}$, $i=1,\ldots, n$ be the training data for the classification problem of $C$ classes.
FFNN classifier with softmax function~\citep{prml} is defined as
\begin{align*}
h_j(\bs x_i)
:=
\frac{\exp(f_{\bs \psi,j}(\bs x_i))}{\sum_{j=1}^{C}\exp(f_{\bs \psi,j}(\bs x_i))},\quad
j=1,\ldots, C,
\end{align*}
where $f_{\bs \psi}(\bs x):=(f_{\bs \psi,1}(\bs x),\ldots,f_{\bs \psi,C}(\bs x)) \in \mathbb{R}^{C}_{\geq 0}$ is a multi-valued neural network, and $\bs x_i$ is classified into the class $\displaystyle \argmax_{j \in \{1,2,\ldots,C\}} h_j(\bs x_i)$. This classifier is equivalent to 
\begin{align}
\argmax_{j \in \{1,2,\ldots,C\}}
\exp\left(
	\langle f_{\bs \psi}(\bs x_i),\bs e_j \rangle
\right),
\label{eq:classifier}
\end{align}
where $\bs e_j \in \{0,1\}^C$ is the 1-hot vector with 1 at $j$-th entry and 0 otherwise. 

The classifier~(\ref{eq:classifier}) can be interpreted as PMvGE with $D=2,\mathcal{D}=\{(1,2)\}$ as follows.
For view-1, $f^{(1)}_{\bs \psi}(\bs x):=f_{\bs \psi}(\bs x)$ and inputs are
$\x_1,\ldots,\x_n$.
For view-2, $f^{(2)}_{\bs \psi}(\bs x'):=\bs x'$ and inputs are
$\bs e_1,\ldots, \bs e_C$.
We set $w_{ij}=1$ between $\x_i$ and $\bs e_{c_i}$, and $w_{ij}=0$ otherwise.


\section{Optimization}
\label{sec:optimization}
In this section, we present an efficient way of optimizing the parameters for maximizing 
the objective function ${\ell}_n(\bs \alpha,\bs \psi)$ defined in eq.~(\ref{eq:log_likelihood}).
We alternatively optimize the two parameters $\bs \alpha$ and $\bs \psi$.
Efficient update of $\bs \psi$ with minibatch SGD is considered in Section~\ref{subsec:scalable_learning_general}, and 
update of $\bs \alpha$ by solving an estimating equation is considered in Section~\ref{subsec:update_alpha}.
We iterate these two steps for maximizing ${\ell}_n(\bs \alpha,\bs \psi)$.


%
\subsection{Update of $\bs \psi$}
\label{subsec:scalable_learning_general}
We update $\bs \psi$ using the gradient of ${\ell}_n(\bar{\bs \alpha},\bs \psi)$ by fixing current parameter value $\bar{\bs \alpha}$.
Since $\W=(w_{ij})$ may be sparse in practice, the computational cost of minibatch SGD~\citep{Goodfellow-et-al-2016-Book} for the first half of ${\ell}_n(\bar{\bs \alpha},\bs \psi)$ is expected to be reduced by considering the sum over the set $\mathcal{W}_n:=\{(i,j) \in \mathcal{I}_n \mid w_{ij}>0\}$. 
On the other hand, there should be $|\mathcal{I}_n| =O(n^2)$ positive terms in the last half of ${\ell}_n(\bar{\bs \alpha},\bs \psi)$, so we consider the sum over node pairs uniformly-resampled from $\mathcal{I}_n$.

We make two sets $\mathcal{I}'_n,\mathcal{W}'_n$ by picking $(i,j)$ from $\mathcal{I}_n$ and $\mathcal{W}_n$, respectively, so that 
$$
|\mathcal{I}_n'|+|\mathcal{W}'_n|=m, \:
|\mathcal{I}_n'|/|\mathcal{W}'_n|=r.
$$ 
User-specified constants $m \in \mathbb{N}$ and $r>0$ are usually called as ``minibatch size" and ``negative sampling rate". 
We sequentially update minibatch $\mathcal{I}'_n,\mathcal{W}'_n$
for computing the gradient of 
\begin{align}
\scalebox{0.9}{$\displaystyle
	\sum_{(i,j) \in \mathcal{W}_n'} w_{ij} \log \mu_{ij}(\bar{\bs \alpha},\bs \psi)
-
\tau
\sum_{(i,j) \in \mathcal{I}_{n}'} \mu_{ij} (\bar{\bs \alpha},\bs \psi)
$}
\label{eq:minibatch_gradient}
\end{align}
with respect to $\bs \psi$. By utilizing the gradient, the parameter $\bs \psi$ can be sequentially updated by SGD, where $\tau>0$ is a tuning parameter.
Eq.~(\ref{eq:minibatch_gradient}) approximates ${\ell}_n(\bar{\bs \alpha},\bs \psi)$ if $(i,j)$ are uniformly-resampled and $\tau=|\mathcal{I}_n|/(r \, |\mathcal{W}_n|)$, however, smaller $\tau$ such as $\tau=1$ may make this algorithm stable in some cases.

\subsection{Update of $\bs \alpha$} 
\label{subsec:update_alpha}
Let $\bar{\bs \psi}$ represent current parameter value of $\bs \psi$.
By solving the estimating equation $\frac{\partial {\ell}_n(\bs \alpha,\bar{\bs \psi})}{\partial \alpha^{(d,e)}}=0$ with respect to $\alpha^{(d,e)}$ under constraints $\bs \alpha=\bs \alpha^{\top}$ and $\alpha^{(d,e)}=0$, $(d,e) \notin \mathcal{D}$, we explicitly obtain a local maximizer of ${\ell}_n(\bs \alpha,\bar{\bs \psi})$. 
However, the local maximizer requires roughly $O(n^2)$ operations for computation. 
To reduce the high computational cost, we efficiently update $\bs \alpha$ by (\ref{eq:alpha_update}), which is a minibatch-based approximation of the local maximizer. 
\begin{align}
\scalebox{0.9}{$\displaystyle
	\hat{\alpha}^{(d,e)}
	:=
	\begin{cases}
\dfrac{		\sum_{(i,j) \in \mathcal{I}_n^{'(d,e)}}w_{ij}}{
		\sum_{(i,j) \in \mathcal{I}_n^{'(d,e)}}\exp(\bar{g}_{ij})
	}
	&\text{for } (d,e) \in \mathcal{D} \\
	0 & \text{otherwise}, \\
	\end{cases}
$}
\label{eq:alpha_update}
\end{align}
where $\bar{g}_{ij}:=\langle f^{(d)}_{\bar{\bs \psi}}(\x_i),f^{(e)}_{\bar{\bs\psi}}(\x_j) \rangle$, $\mathcal{I}_n^{'(d,e)}:=\{(i,j) \in \mathcal{I}'_n \mid (d_i,d_j)=(d,e)\}$, and $\mathcal{I}'_n$ is the minibatch defined in Section~\ref{subsec:scalable_learning_general}. 
Note that $(d,e)$ is unordered view pair.

\subsection{Computational cost}
PMvGE requires $O(m)$ operations for each minibatch iteration.
It is efficiently computed even if the number of data vectors $n$ is very large.

%
%

\section{PMvGE learns arbitrary similarity measure}
\label{sec:theoretical_properties}
Two theoretical results are shown here for indicating that PMvGE with sufficiently large neural networks
 learns arbitrary similarity measure using sufficiently many data vectors.
In Section~\ref{subsec:case_neural_network}, 
we prove that arbitrary similarity measure can be approximated by the inner product in the shared space
with sufficiently large neural networks.
In Section~\ref{subsec:consistency_pmvge}, we prove that
MLE of PMvGE converges to the true parameter value, i.e., the consistency of MLE, in some sense as the number of data vectors increases.

\subsection{Inner product of NNs approximates a wide class of similarity measures across views}
\label{subsec:case_neural_network}

Feedforward neural networks with ReLU or sigmoid function are proved to be able to approximate arbitrary continuous functions under some assumptions~\citep{cybenko1989approximation,funahashi1989approximate,yarotsky2016error,pmlr-v70-telgarsky17a}.  
However, these results cannot be directly applied to PMvGE, because our model is based on the inner product of two neural networks $\langle f_{\bs \psi}^{(d_i)}(\bs x_i),f_{\bs \psi}^{(d_j)}(\bs x_j)\rangle$. 
In Theorem~\ref{theo:universal_approximate}, we show that the inner product can approximate 
$
g_*\left(
    f_*^{(d_i)}(\bs x_i),f_*^{(d_j)}(\bs x_j)
\right)
$, that is,
arbitrary similarity measure $g_* (\cdot, \cdot)$ in $K_*$ dimensional shared space, where
$f_*^{(d_i)}$, $f_*^{(d_j)}$ are arbitrary two continuous functions.
For showing 
$g_*( f_*^{(d)}(\bs x), f_*^{(e)}(\bs x') ) \approx \langle f_{\bs \psi}^{(d)}(\bs x), f_{\bs \psi}^{(e)}(\bs x')  \rangle$,
the idea is first consider a feature map $\bs \Phi_K : \mathbb{R}^{K^*} \to \mathbb{R}^K$
for approximating
$g_*(\bs y_*,\bs y'_*)  \approx \langle \bs \Phi_K(\bs y_*),\bs \Phi_K(\bs y_*')\rangle$ with sufficiently large $K$,
and then consider neural networks $f_{\bs \psi}^{(d)}: \mathbb{R}^{p_d} \to \mathbb{R}^K$ with sufficiently many hidden units
for approximating $ \bs \Phi_K(f_*^{(d)}(\bs x)) \approx f_{\bs \psi}^{(d)}(\bs x)$.




\begin{theo}
\normalfont
\label{theo:universal_approximate}
Let $f^{(d)}_*:[-M,M]^{p_d}\to[-M',M']^{K_*}$, $d=1,2,\ldots,D$,
be continuous functions and $g_*:[-M',M']^{K_*} \times [-M',M']^{K_*} \to \mathbb{R}$ be a symmetric, continuous, and positive-definite kernel function for some $K^*, M,M'>0$. 
$\sigma(\cdot)$ is ReLU or activation function which is non-constant, continuous, bounded, and monotonically-increasing.
Then, for arbitrary $\varepsilon>0$, 
by specifying sufficiently large $K \in \mathbb{N},T=T(K) \in \mathbb{N}$, 
there exist $\bs A^d \in \mathbb{R}^{K \times T},\bs B^d \in \mathbb{R}^{T \times p_d},\bs c \in \mathbb{R}^{T}$, $d \in \{1,2,\ldots,D\}$, such that
\begin{equation}
\scalebox{0.9}{$
\displaystyle 
\bigg|
g_*\left(f^{(d)}_*(\x),f^{(e)}_*(\x')\right)
-
\big\langle f_{\bs \psi}^{(d)}(\x), f_{\bs \psi}^{(e)}(\x') \big\rangle
\bigg|
<\varepsilon
$}
\end{equation}
for all $(\bs x,\bs x') \in [-M,M]^{p_d + p_e}$,
$d,e \in \{1,2,\ldots,D\}$,
 where $f_{\bs \psi}^{(d)}(\x^d)
=
\bs A^d \bs \sigma(\bs B^d\x^d + \bs c^d)$, $d=1,2,\ldots,D$, are two-layer neural networks with $T$ hidden units
and  $\bs\sigma(\x)$ is element-wise $\sigma(\cdot)$ function.
\end{theo}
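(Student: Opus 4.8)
The plan is to follow the two-stage strategy outlined just before the statement: first use Mercer's theorem to write $g_*$ as an inner product of a finite feature map, then use the universal approximation theorem to realize that feature map composed with $f_*^{(d)}$ by the two-layer networks $f_{\bs \psi}^{(d)}$. Concretely, since $g_*$ is symmetric, continuous, and positive-definite on the compact domain $[-M',M']^{K_*} \times [-M',M']^{K_*}$, Mercer's theorem gives an expansion $g_*(\y,\y') = \sum_{k=1}^{\infty} \lambda_k \phi_k(\y) \phi_k(\y')$ with $\lambda_k \geq 0$ and continuous eigenfunctions $\phi_k$, converging absolutely and uniformly. Defining the truncated feature map $\bs \Phi_K(\y) := (\sqrt{\lambda_1}\phi_1(\y), \ldots, \sqrt{\lambda_K}\phi_K(\y)) \in \mathbb{R}^K$, uniform convergence lets me choose $K$ so that $\big| g_*(\y,\y') - \langle \bs \Phi_K(\y), \bs \Phi_K(\y') \rangle \big| < \varepsilon/2$ for all $\y, \y'$. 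Since $K$ depends only on $g_*$, one $K$ serves every view pair $(d,e)$.

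Next I would approximate the composite maps by networks. For each view $d$, the map $\bs \Phi_K \circ f_*^{(d)} : [-M,M]^{p_d} \to \mathbb{R}^K$ is continuous, hence bounded by some $R > 0$ on its compact domain. Applying the universal approximation theorem coordinatewise---using one shared hidden layer with enough units $T$ to approximate all $K$ output coordinates simultaneously---I obtain, for any $\delta > 0$, matrices $\bs A^d, \bs B^d$ and a vector $\bs c^d$ with $\sup_{\x} \| \bs \Phi_K(f_*^{(d)}(\x)) - f_{\bs \psi}^{(d)}(\x) \| < \delta$, where $f_{\bs \psi}^{(d)}(\x) = \bs A^d \bs \sigma(\bs B^d \x + \bs c^d)$. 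As there are only finitely many views, I take $T$ to be the largest value needed across all $d$.

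Finally I would combine the two estimates. Writing $\bs a = f_{\bs \psi}^{(d)}(\x)$, $\bs a' = \bs \Phi_K(f_*^{(d)}(\x))$ and likewise $\bs b, \bs b'$ for view $e$ at $\x'$, bilinearity of the inner product gives
$$
\big| \langle \bs a, \bs b \rangle - \langle \bs a', \bs b' \rangle \big|
\leq \| \bs a - \bs a' \| \, \| \bs b \| + \| \bs a' \| \, \| \bs b - \bs b' \|
\leq \delta(R + \delta) + R\delta ,
$$
using $\| \bs a' \|, \| \bs b' \| \leq R$ and $\| \bs b \| \leq R + \delta$. Taking $\delta$ small makes this at most $\varepsilon/2$, and combining with the Mercer estimate through the triangle inequality yields the bound $\varepsilon$, uniformly in $\x, \x'$ and over all view pairs.

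The step I expect to be the main obstacle is the junction of the two theorems rather than either in isolation. I must ensure Mercer's series converges uniformly---so that a single truncation level $K$ controls the error over the whole domain and the eigenfunctions $\phi_k$ are genuinely continuous, which is precisely what makes $\bs \Phi_K \circ f_*^{(d)}$ continuous and hence approximable by a network---and I must carry the uniform bound $R$ through the bilinear cross-term estimate so that one $\delta$ simultaneously controls the inner-product error for all $D$ views. The finiteness of $D$ is essential, as it permits passage to a common hidden width $T$.
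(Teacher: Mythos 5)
Your proposal is correct and follows essentially the same route as the paper's proof: Mercer's theorem to get a uniformly convergent finite-rank feature map $\bs \Phi_K$, the universal approximation theorem applied to the continuous composites $\bs \Phi_K \circ f_*^{(d)}$, and a bilinear/triangle-inequality estimate to combine the two errors. The only cosmetic difference is that you bound the cross terms via Cauchy--Schwarz with vector norms ($\delta(R+\delta)+R\delta$), whereas the paper sums coordinatewise bounds ($\varepsilon_1 + KC\varepsilon_2 + K\varepsilon_2(C+\varepsilon_2)$); these are the same decomposition $\langle \bs a,\bs b\rangle - \langle \bs a',\bs b'\rangle = \langle \bs a-\bs a',\bs b\rangle + \langle \bs a',\bs b-\bs b'\rangle$ in different clothing.
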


Proof of the theorem is given in Supplement~\ref{sec:univ_supp}.


If $D=1$, Theorem~\ref{theo:universal_approximate} corresponds to Mercer's theorem~\citep{mercer1909functions,courant89} of Kernel methods, which states that arbitrary positive definite kernel $g_*(\cdot,\cdot)$ can be expressed as the inner product of high-dimensional feature maps. 
While Mercer's theorem indicates only the existence of such feature maps, Theorem~\ref{theo:universal_approximate} also states that the feature maps can be approximated by neural networks.




\textbf{Illustrative example} 
As an example of positive definite similarity, we consider the cosine similarity 
\[
g_*(f_*(\bs x),f_*(\bs x'))
:=
\frac{
\langle f_*(\bs x),f_*(\bs x') \rangle
}{\|f_*(\bs x)\|_2 \|f_*(\bs x')\|_2}	
\]
with 
$f_*(\bs x)=(x_1,\cos x_2,\exp(-x_3),\sin(x_4-x_5)),p=5$. 
For 2-dim visualization in Fig.~\ref{fig:exp_uat} with $(s,t)\in\mathbb{R}^2$, let us define
$
G_*(s,t):=
g_*(f_*(s \bs e_1),f_*(t \bs e_2)),
\bs e_1:=(1,1,1,0,0),
\bs e_2:=(0,0,1,1,1)$ and its approximation by the inner product of neural networks $
\hat{G}_K(s,t):=\langle f_{\bs \psi}(s\bs e_1), f_{\bs \psi}(t\bs e_2) \rangle
$ with $T = 10^3$ hidden units.
If $K$ and $T$ are sufficiently large, $\hat{G}_K(s,t)$ 
approximates $G_*(s,t)$ very well 
as suggested by Theorem~\ref{theo:universal_approximate}.

\begin{figure}[htbp]
    \begin{center}
        \subfigure[$\hat{G}_2(s,t)$]{
        \label{fig:h2}
            \includegraphics[width=2.2cm]{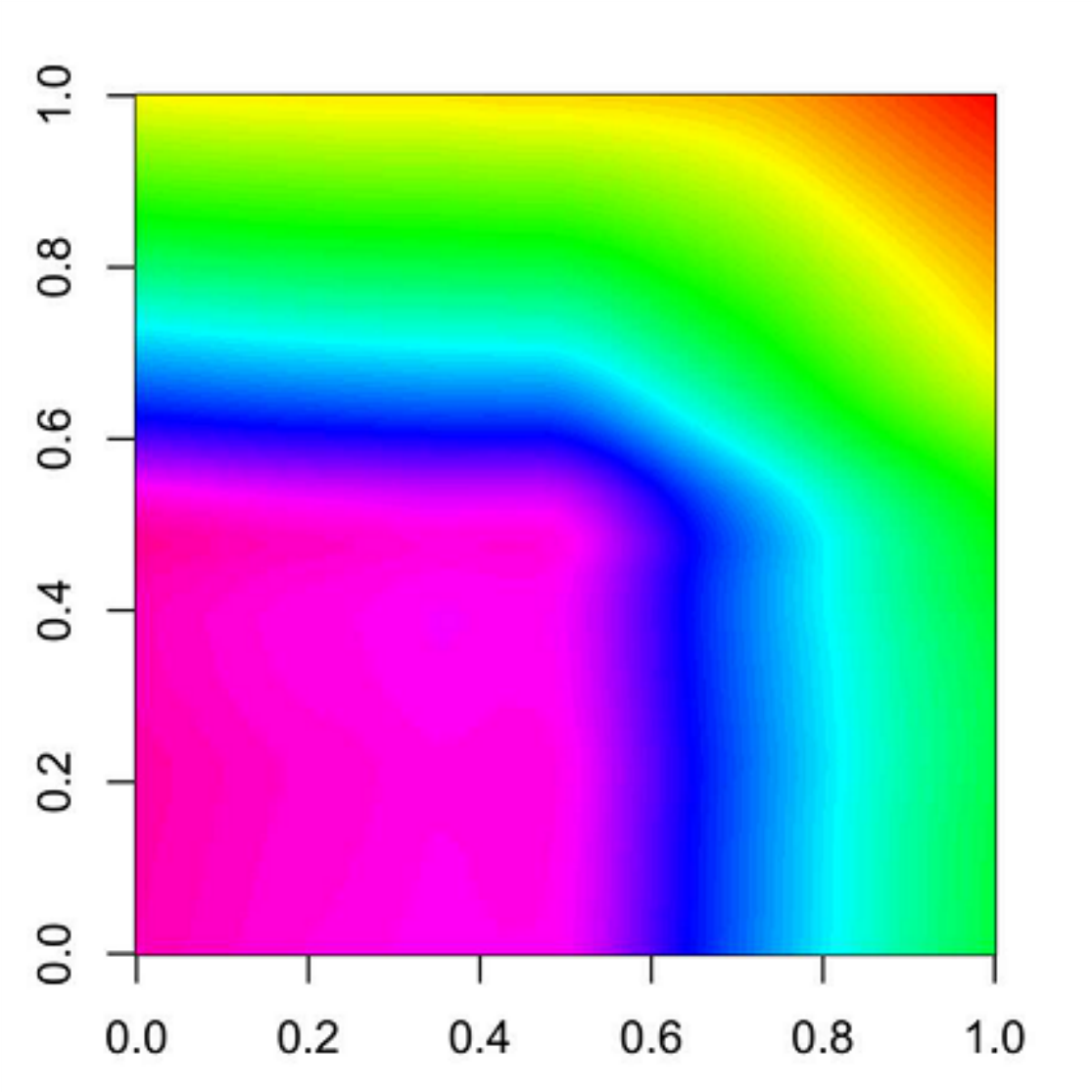}}
        \subfigure[$\hat{G}_{50}(s,t)$]{
        \label{fig:h100}
            \includegraphics[width=2.2cm]{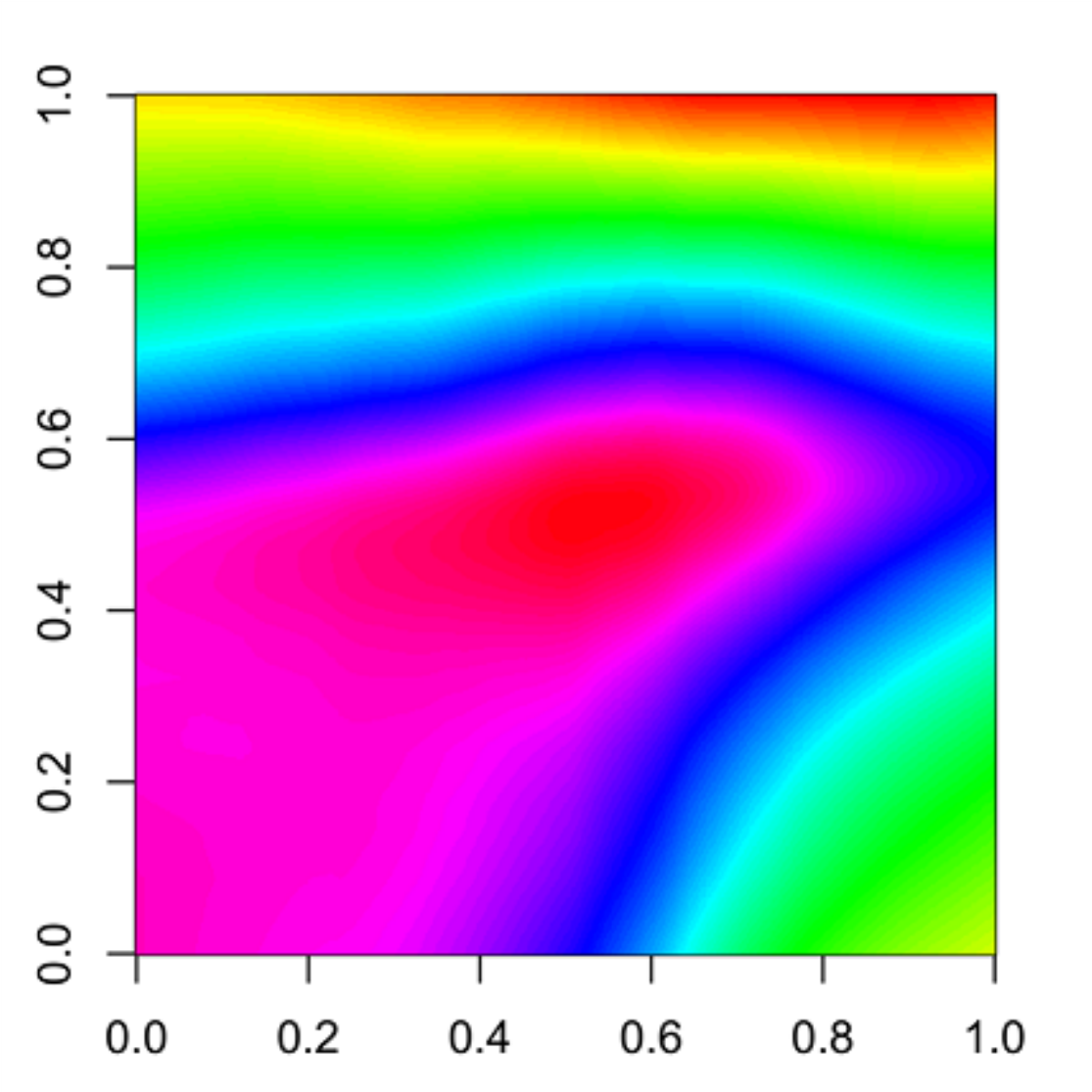}}
        \subfigure[$G_*(s,t)$]{
        \label{fig:htrue}
           \includegraphics[width=2.2cm]{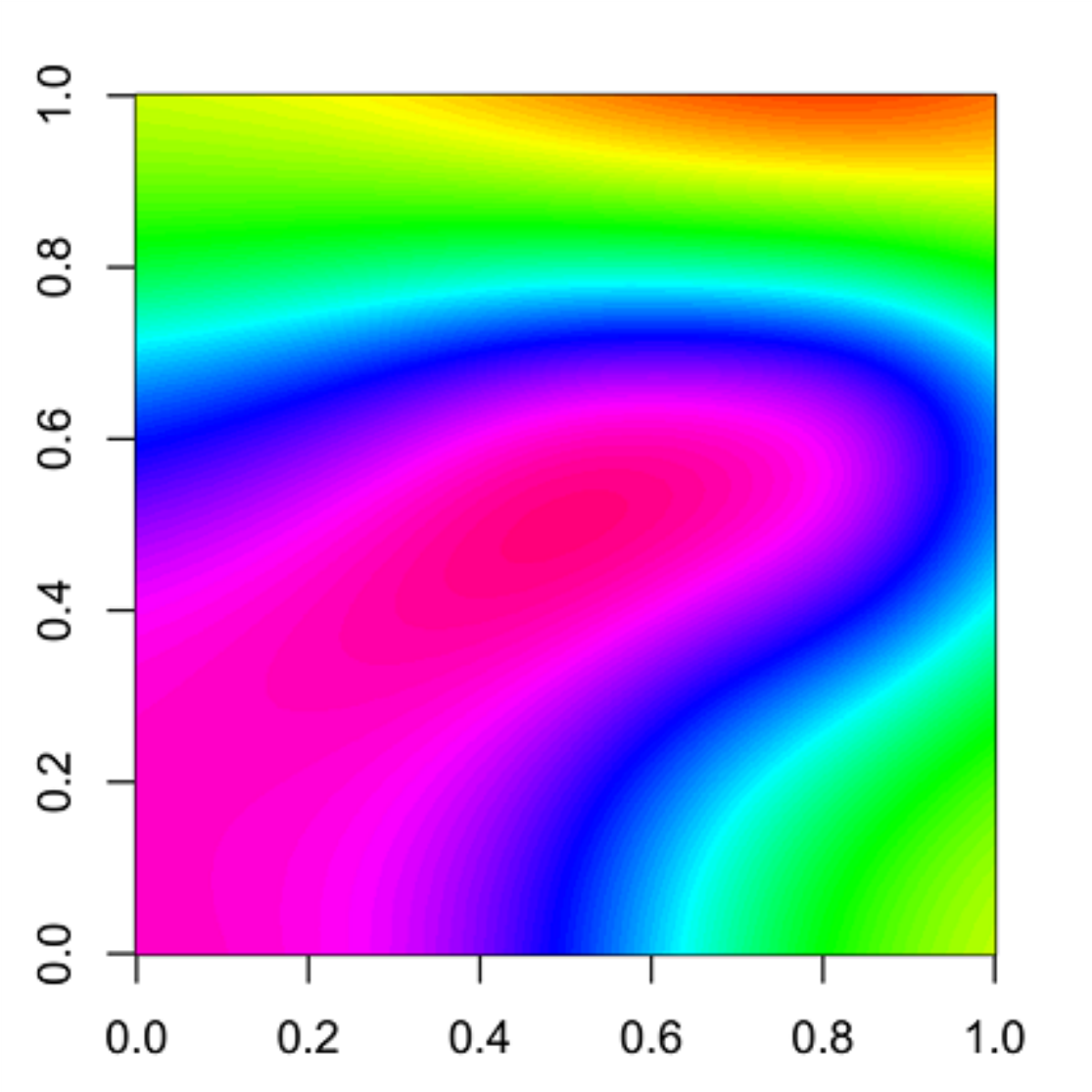}}
        \includegraphics[width=0.8cm]{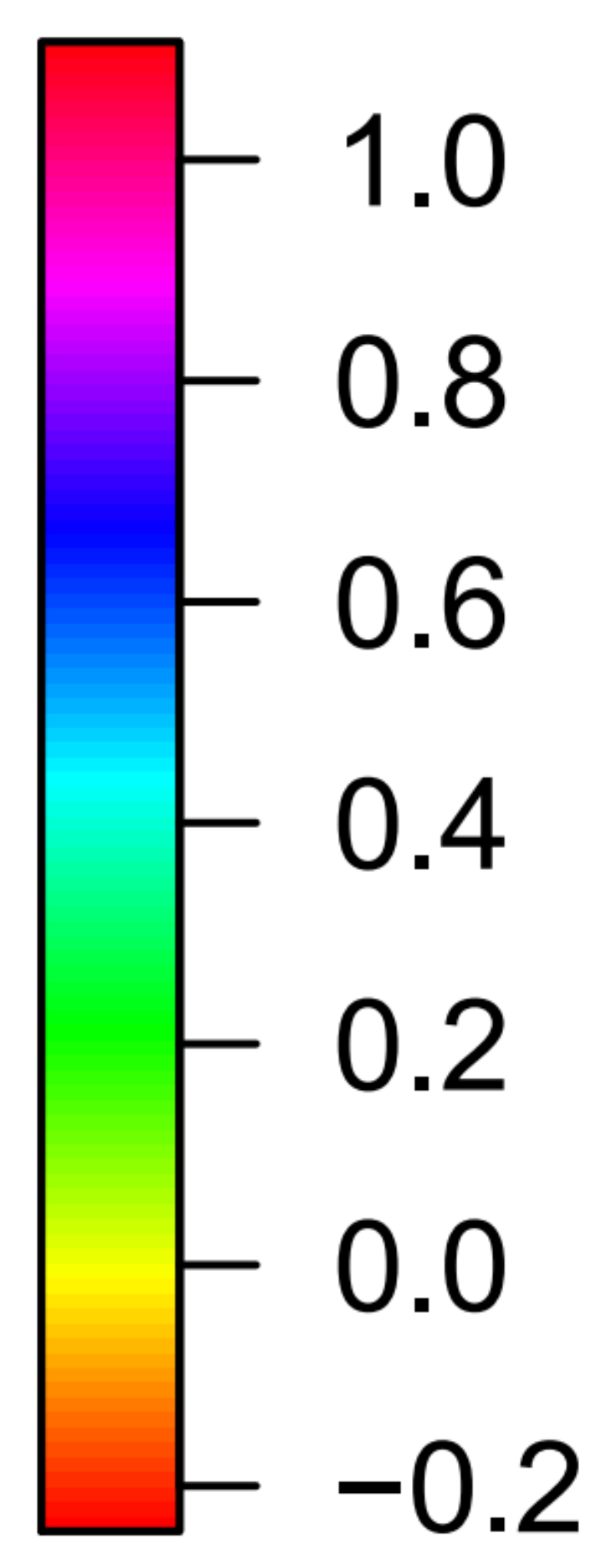}
    \end{center}   
    \vspace{-1em}
\caption{A two-dim visualization $G_*(s,t)$ of similarity measure $g_*(\bs y_*, \bs y_*') =\frac{ \langle \bs y,\bs y' \rangle }{\|\bs y\|_2 \|\bs y'\|_2}$ with $K_*=2$ is well approximated by the visualization $\hat G_K(s,t)$ of the inner product
$\langle \bs y, \bs y'\rangle$ for $K=100$, while the approximation is poor for $K=2$.
A neural network $f_{\bs \psi}$ with $T=10^3$ ReLU units and $K=2$ or $K=50$ linear output units
are trained with $n=10^3$ data vectors and link weights across views.
}
\label{fig:exp_uat} 
\end{figure}

\subsection{MLE converges to the true parameter value} 
\label{subsec:consistency_pmvge}

We have shown the universal approximation theorem of similarity measure in Theorem~\ref{theo:universal_approximate}.
However, it only states that the good approximation is achieved if we properly tune the parameters of neural networks.
Here we argue that MLE of Section~\ref{subsec:mle} will actually achieve the good approximation if we have sufficiently many data vectors.
The technical details of the argument are given in Supplement~\ref{sec:consistency_supp}.


Let $\bs \theta \in \bs \Theta$ denote the vector of free parameters in $\bs \alpha, \bs \psi$,
and $\ell_n(\bs \theta)$ be the log-likelihood function (\ref{eq:log_likelihood}).
We assume that the optimization algorithm in Section~\ref{sec:optimization} successfully
computes MLE $\hat {\bs \theta}_n$ that maximizes $\ell_n(\bs \theta)$.
Here we ignore the difficulty of global optimization, while we may only get a local maximum in practice.
We also assume that PMvGE is correctly specified; there exists a parameter value $\bs \theta_*$ so that the parametric model represents the true probability distribution.

Then, we would like to claim that $\hat {\bs \theta}_n$ converges to the true parameter value $\bs \theta_*$ in the limit of $n\to\infty$, the property called the consistency of MLE.
However, we have to pay careful attention to the fact that PMvGE is not a standard setting
in the sense that (i)~there are correlated $O(n^2)$ samples instead of $n$ i.i.d.~observations, and (ii)~the model is not identifiable with infinitely many equivalent parameter values; for example there are rotational degrees of freedom in the shared space so that $\langle \bs y, \bs y'\rangle = \langle \bs O \bs y, \bs O \bs y'\rangle$ with any orthogonal matrix $\bs O$ in $\mathbb{R}^K$.
We then consider the set of equivalent parameters
$\hat{\bs \Theta}_n:=\{\bs \theta \in \bs \Theta \mid \ell_n(\bs \theta)=\ell_n(\hat{\bs \theta}_n)\}$.
Theorem~\ref{theo:consistency_general} states that, as $n\to\infty$,
$\hat{\bs \Theta}_n$ converges to $\bs \Theta_*$, the set of $\bs \theta$ values equivalent to $\bs \theta_*$.



\section{Real data analysis}
\label{sec:real_data}

\subsection{Experiments on Citation dataset (1-view)}
\label{subsec:real_1view}

\textbf{Dataset:} 
We use Cora dataset~\citep{sen2008collective} of citation network with  2,708 nodes and 5,278 ordered edges. 
Each node $v_i$ represents a document, which has $1,433$-dimensional (bag-of-words) data vector
$\x_i \in \{0,1\}^{1433}$ and a class label of 7 classes.
Each directed edge represents citation from a document $v_i$ to another document $v_j$.
We set the link weight as $w_{ij}=w_{ji}=1$ by ignoring the direction, and $w_{ij}=0$ otherwise. 
There is no cross or self-citation.
We divide the dataset into 
training set consisting of $2,166$ nodes ($80 \%$) with their edges, and test set consisting of remaining $542$ nodes ($20\%$) with their edges. We utilize $20\%$ of the training set for validation. Hyper-parameters are tuned by utilizing the validation set.

We compare PMvGE with several feature learning methods:
Stochastic Block Model~\citep[SBM]{holland1983stochastic},
ISOMAP~\citep{tenenbaum2000global}, 
Locally Linear Embedding~\citep[LLE]{roweis2000nonlinear}, 
Spectral Graph Embedding~\citep[SGE]{belkin2001laplacian}, 
Multi Dimensional Scaling~\citep[MDS]{kruskal1964multidimensional}, DeepWalk~\citep{perozzi2014deepwalk}, and GraphSAGE~\citep{hamilton2017inductive}.

\textbf{NN for PMvGE:}
2-layer fully-connected network, which consists of 3,000 tanh hidden units and 1,000 tanh output units, is used.
The network is trained by Adam~\citep{kingma2014adam} with batch normalization.
The learning rate is starting from $0.0001$ and attenuated by $1/10$ for every 100 iterations. 
Negative sampling rate $r$ and minibatch size $m$ are set as $1$ and $512$, respectively, and the number of iterations is $200$.

\textbf{Parameter tuning:}
For each method, parameters are tuned on validation sets. Especially, the dimension of feature vectors is selected from $\{50,100,150,200\}$. 

\textbf{Label classification (Task 1):} We classify the documents into 7 classes using logistic regression with the feature vector as input and the class label as output.
We utilize LibLinear~\citep{fan2008liblinear} for the implementation.

\textbf{Clustering (Task 2):} 
The $k$-means clustering of the feature vectors is performed for unsupervised learning of document clusters.
The number of clusters is set as $7$.

\textbf{Results:} 
The quality of classification is evaluated by classification accuracy in Task~1, and
Normalized Mutual Information~(NMI) in Task~2.
Sample averages and standard deviations over 10 experiments are shown in Table~\ref{table:exp_result}. 
In experiment (A), we apply methods to both training set and test set, and evaluate them by test set. 
In (B), we apply methods to only the training set, and evaluate them by test set. 
SGE, MDS, and DeepWalk are not inductive, and they cannot be applied to unseen data vectors in (B). 
PMvGE outperforms the other methods in both experiments.

\subsection{Experiments on AwA dataset (2-view)}
\label{subsec:real_2view}

\textbf{Dataset:} We use Animal with Attributes~(AwA) dataset~\citep{lampert2009learning}
with 30,475 images for view-1 and 85 attributes for view-2. 
We prepared $4,096$ dimensional DeCAF data vector~\citep{donahue2014decaf} for each image,
and $300$ dimensional GloVe~\citep{pennington2014glove} data vector for each attribute.
Each image is associated with some attributes. 
We set $w_{ij}=1$ for the associated pairs between the two views, and $w_{ij}=0$ otherwise.
In addition to the attributes, each image has a class label of 50 classes.
We resampled 50 images from each of 50 classes; in total, 2500 images.
In each experiment, we split the 2500 images into 1500 training images and 100 test images.
A validation set of 300 images is sampled from the training images.

We compare PMvGE with CCA, DCCA~\citep{andrew2013deep}, SGE, DeepWalk, and GraphSAGE.


\textbf{NN for PMvGE:}
Each view uses a 2-layer fully-connected network, which consists of
2000 tanh hidden units and 100 tanh output units.
The dimension of the feature vector is $K=100$.
Adam is used for optimization with Batch normalization and Dropout ($p = 0.5$).
Minibatch size, learning rate, and momentum are tuned on the validation set.
We monitor the score on the validation set for early stopping.

\textbf{Parameter tuning:}
For each method, parameters are tuned on validation sets. Especially, the dimension of feature vectors is selected from $\{10,50,100,150\}$.

\textbf{Link prediction (Task~3):}
For each query image, we rank attributes according to the cosine similarity of feature vectors across views.
An attribute is regarded as correct if it is associated with the query image.

\textbf{Results:} 
The quality of the ranked list of attributes is measured by Average Precision~(AP) score in Task~3.
Sample averages and standard deviations over 10 experiments are shown in Table~\ref{table:exp_result}.
In experiment (A), we apply methods to both training set and test set, and evaluate them by test set.
The whole training set is used for validation. 
In experiment (B), we apply methods to only the training set, and evaluate them by test set.
20$\%$ of training set is used for validation. 
PMvGE outperforms the other methods including DCCA.
While DeepWalk shows good performance in experiment (A), DeepWalk and SGE cannot be applied to unseen data vectors in (B). 
Unlike SGE and DeepWalk which only consider the associations, 1-view feature learning methods such as GraphSAGE cannot be applied to this AwA dataset since the dimension of data vectors is different depending on the view. So we do not perform 1-view methods in Task 3.

%
%

%

\begin{table}[htbp]
  \centering
  \caption{Task 1 and Task 2 for the experiment on Citation dataset ($D=1$), and Task 3 for the experiment on AwA dataset ($D=2$).
  The larger values are better.}
  \label{table:exp_result}
  \scalebox{0.9}{
\begin{tabular}{cl|cc}
& & (A) & (B) \\
\hline
\multirow{7}{*}{\shortstack[l]{Task 1 \\ ($D=1$)}}
& ISOMAP & $54.5 \pm 1.78$ & $54.8 \pm 2.43$ \\
& LLE & $30.2 \pm 1.91$ & $31.9 \pm 2.62$ \\
& SGE & $47.6 \pm 1.64$ & - \\
& MDS & $29.8 \pm 2.25$ & - \\
& DeepWalk & $54.2 \pm 2.04$ & - \\
& GraphSAGE & $60.8 \pm 1.73$ & $57.1 \pm 1.61$ \\
& \textbf{PMvGE} & $\mathbf{74.8 \pm 2.55}$ & $\mathbf{71.1 \pm 2.10}$ \\
\hline
\multirow{8}{*}{\shortstack[l]{Task 2 \\ ($D=1$)}}
& SBM & $4.37 \pm 1.44$ & $2.81 \pm 0.10$ \\
& ISOMAP & $13.0 \pm 0.36$ & $14.3 \pm 1.98$ \\
& LLE & $7.40 \pm 3.40$ & $9.47 \pm 3.00$ \\
& SGE & $1.41 \pm 0.34$ & - \\
& MDS & $2.81 \pm 0.10$ & - \\
& DeepWalk & $16.7 \pm 1.05$ & - \\
& GraphSAGE & $19.6 \pm 0.93$ & $12.4 \pm 3.00$ \\
& \textbf{PMvGE} & $\mathbf{35.9 \pm 0.88}$ & $\mathbf{30.5 \pm 3.90}$ \\
\hline
\multirow{5}{*}{\shortstack[l]{Task 3 \\ ($D=2$)}}
& CCA & $45.5 \pm 0.20$ & $42.4 \pm 0.30$ \\
& DCCA & $41.4 \pm 0.30$ & $41.2 \pm 0.35$ \\
& SGE & $43.5 \pm 0.39$ & - \\
& DeepWalk & $71.3 \pm 0.57$ & - \\
& \textbf{PMvGE} & $\mathbf{71.5 \pm 0.48}$ & $\mathbf{70.5 \pm 0.53}$ \\
\end{tabular}
}
\end{table}

\textbf{Locality of each-view is preserved through neural networks:} 
To see whether the locality of input is preserved through neural networks in PMvGE, we computed the Spearman's rank correlation coefficient between $\langle \bs x, \bs x'\rangle$ and
 $\langle f_{\bs \psi}^{(d)}(\bs x), f_{\bs \psi}^{(d)}(\bs x') \rangle$ for view-$d$ data vectors $\bs x, \bs x'$ in AwA dataset ($d=1,2$).
For DeCAF (view-1) and GloVe (view-2) inputs, the values are $0.722 \pm 0.058$ and $0.811 \pm 0.082$, respectively. This result indicates that the feature vectors of PMvGE preserves the similarities of the data vectors fairly well.

\section{Conclusion}
\label{sec:conclusion}
We presented a simple probabilistic framework for multi-view learning with many-to-many associations. 
We name the framework as Probabilistic Multi-view Graph Embedding~(PMvGE). 
Various existing methods are approximately included in PMvGE.
We gave theoretical justification and practical estimation algorithm to PMvGE.
Experiments on real-world datasets showed that PMvGE outperforms existing methods.



\clearpage
\appendix
\onecolumn

\begin{flushleft}
\textbf{\Large Supplementary Material:} 
{\Large A probabilistic framework for multi-view feature learning with many-to-many associations via neural networks}
\end{flushleft}
\hrulefill

\section{Proof of Theorem~\ref{theo:universal_approximate}}
\label{sec:univ_supp}

Since $g_*:[-M',M']^{2K_*} \to \mathbb{R}$ is a positive definite kernel on a compact set,
it follows from Mercer's theorem that
there exist positive eigenvalues $\{\lambda_k\}_{k=1}^{\infty}$ and
continuous eigenfunctions $\{\phi_k\}_{k=1}^{\infty}$ such that
\[
g_*(\bs y_*,\bs y_*') = \sum_{k=1}^{\infty} \lambda_k \phi_k(\bs y_*)\phi_k(\bs y_*'),\quad
\bs y_*, \bs y_*' \in [-M',M']^{K_*},
\]
where the convergence is absolute and uniform~\citep{minh2006mercer}.
The uniform convergence implies that for any $\varepsilon_1>0$ there exists $K_0 \in \mathbb{N}$
such that 
\begin{align*}
\sup_{(\bs y_*,\bs y_*') \in [-M',M']^{2K_*}}
	\bigg|
		g_*(\bs y_*,\bs y_*')
		-
		\sum_{k=1}^{K} \lambda_k \phi_k(\bs y_*) \phi_k(\bs y_*')
	\bigg|
	<
	\varepsilon_1,
	\quad
	K \geq K_0.
\end{align*}
This means $g_*(\bs y_*,\bs y'_*)  \approx \langle \bs \Phi_K(\bs y_*),\bs \Phi_K(\bs y_*')\rangle$
for a feature map $\bs \Phi_K(\bs y_*) = (\sqrt{\lambda_k} \phi_k(\bs y_*) )_{k=1}^K$.

We fix $K$ and consider approximation of $h_k^{(d)}(\bs x) := \sqrt{\lambda_k} \phi_k( f_*^{(d)}(\bs x))$ below.
Since $h_k^{(d)}$ are continuous functions on a compact set, there exists $C=C(K)>0$ such that
\[
	\sup_{\bs x \in [-M,M]^{p_d}} | h_k^{(d)}(\bs x) | < C, \quad k=1,\ldots, K,\, d=1,\ldots, D.
\]
Let us write the neural networks as $f_{\bs \psi}^{(d)} = (f_1^{(d)}, \ldots, f_K^{(d)})$, where
$f_k^{(d)}: \mathbb{R}^{p_d} \to \mathbb{R}$, $d=1,\ldots, D$, $k=1,\ldots,K$, are
two-layer neural networks with $T$ hidden units.
Since $h_k^{(d)}$ are continuous functions,
it follows from the universal approximation theorem \citep{cybenko1989approximation,pmlr-v70-telgarsky17a} that
for any $\varepsilon_2>0$, there exists $T_0(K) \in \mathbb{N}$ such that
\[
	\sup_{\bs x \in [-M,M]^{p_d}} | h_k^{(d)}(\bs x)  - f_k^{(d)}(\bs x)  | < \varepsilon_2, \quad k=1,\ldots, K,\, d=1,\ldots, D
\]
for $T\ge T_0(K)$.
Therefore, for all $d,e \in \{1,2,\ldots,D\}$, we have
\begin{align*}
\sup_{(\bs x,\bs x') \in [-M,M]^{p_d+p_e}}
&\biggl|
g_*\left(
	f^{(d)}_*(\bs x)
	,
	f^{(e)}_*(\bs x')
\right)
-
\sum_{k=1}^K f_k^{(d)}(\bs x) f_k^{(e)}(\bs x') \biggr|\\
&\leq
\sup_{(\bs x,\bs x') \in [-M,M]^{p_d+p_e}}
\biggl|
g_*\left(
	f^{(d)}_*(\bs x)
	,
	f^{(e)}_*(\bs x')
\right)
-
\sum_{k=1}^K h_k^{(d)}(\bs x) h_k^{(e)}(\bs x') \biggr| \\
&\hspace{2em}
+
\sup_{(\bs x,\bs x') \in [-M,M]^{p_d+p_e}}
\biggl|
\sum_{k=1}^K h_k^{(d)}(\bs x)  \left( h_k^{(e)}(\bs x') -  f_k^{(e)}(\bs x')  \right)
\biggr| \\
&\hspace{2em}
+
\sup_{(\bs x,\bs x') \in [-M,M]^{p_d+p_e}}
\biggl|
\sum_{k=1}^K  \left( h_k^{(d)}(\bs x) - f_k^{(d)}(\bs x)  \right) f_k^{(e)}(\bs x') 
\biggr| \\ 
&\leq 
\sup_{\bs y_*,\bs y_*' \in [-M',M']^{K_*}}
\biggl|
g_*\left(
	\bs y_*,\bs y_*'
\right)
-
\sum_{k=1}^{K}
\lambda_k 
\phi_k(\bs y_*)
\phi_k(\bs y_*')
\biggr| \\
&\hspace{2em}
+
\sum_{k=1}^K
\sup_{\bs x \in [-M,M]^{p_d}}
|h_k^{(d)}(\bs x) |
\sup_{\bs x' \in [-M,M]^{p_e}}
 \left| h_k^{(e)}(\bs x') -  f_k^{(e)}(\bs x')  \right| \\
&\hspace{2em}
+
\sum_{k=1}^K
\sup_{\bs x \in [-M,M]^{p_d}}
\left| h_k^{(d)}(\bs x) - f_k^{(d)}(\bs x)  \right|
\sup_{\bs x' \in [-M,M]^{p_e}}
|f_k^{(e)}(\bs x') | \\
& <
\varepsilon_1 + K C \varepsilon_2 + K  \varepsilon_2 (C+\varepsilon_2).
\end{align*}
By letting
$\varepsilon_1=\varepsilon/2,
\varepsilon_2= \min\left(C,  {\varepsilon}/({6KC}) \right)$,
the last formula becomes smaller than $\varepsilon$, thus proving
\[
\sup_{(\bs x,\bs x') \in [-M,M]^{p_d+p_e}}
\biggl|
g_*\left(
	f^{(d)}_*(\bs x)
	,
	f^{(e)}_*(\bs x')
\right)
-
\sum_{k=1}^K f_k^{(d)}(\bs x) f_k^{(e)}(\bs x') 
\biggr|
<
\varepsilon,\quad d,e=1,\ldots,D.
\]

\qed

\section{Consistency of MLE in PMvGE} 
\label{sec:consistency_supp}

In this section, we provide technical details of the argument of Section~\ref{subsec:consistency_pmvge}.

For proving the consistency of MLE, we introduce the following generative model.
Let $d_i$, $i=1,\ldots, n$, be random variables independently distributed with the probability $\mathbb{P}(d_i=d)=\eta^{(d)} \in (0,1)$ where $\sum_{d=1}^{D}\eta^{(d)}=1$. 
Data vectors are also treated as random variables. The conditional distribution of $\bs x_i$ given $d_i$ is
$$
\x_i \mid d_i
\overset{\text{indep.}}{\sim}
q^{(d_i)},\quad i=1,\ldots,n,
$$ 
where $q^{(d_i)}$ is a distribution on a compact support in $\mathbb{R}^{p_{d_i}}$.
Let us denote eq.~(\ref{eq:mu_ij}) as $\mu_{ij}(\bs x_i, \bs x_j, d_i, d_j | \bs \alpha,\bs \psi)$ 
for indicating the dependency on $(\bs x_i, \bs x_j, d_i, d_j)$.
The conditional distributions of link weights are already specified in (\ref{eq:w_model}) as

$$
w_{ij} \mid\x_i,\x_j,d_i,d_j \overset{\text{indep.}}{\sim} \text{Po}(\mu^*_{ij}),\quad
i,j=1,\ldots,n,
$$
where $\mu^*_{ij}=\mu_{ij}(\bs x_i, \bs x_j, d_i, d_j | \bs \alpha_*,\bs \psi_*)$ with a true parameter $(\bs \alpha_*,\bs \psi_*)$. 
Due to the constraints $\bs \alpha=\bs \alpha^{\top}$ and $w_{ij}=0 \: ((d_i,d_j) \notin \mathcal{D})$, 
the vector of free parameters in $\bs \alpha$ is
$\bs \alpha_{\mathcal{D}} :=\{\alpha^{(d,e)}\}_{(d,e) \in \mathcal{D},d \leq e} \in \mathbb{R}_{\geq 0}^{|\mathcal{D}|}$, and we write $\bs \theta:=(\bs \alpha_{\mathcal{D}},\bs \psi)$. 
Let $\tilde{n}:=|\mathcal{I}_n|=O(n^2)$ denote the number of terms in the sum of $\ell_n(\bs \theta)$ in eq.~(\ref{eq:log_likelihood}).
Then the expected value of $\tilde{n}^{-1}\ell_n(\bs \theta)$ under the generative model with the true parameter $\bs \theta_*$ is expressed as
\[
	\ell(\bs \theta):=\ave_{\x_1,\x_2,d_1,d_2}\Bigl[
	\mu_{12}(\x_1,\x_2,d_1,d_2 | \bs \theta_*)
	\log
	\mu_{12}(\x_1,\x_2,d_1,d_2 | \bs \theta)
	-
	\mu_{12}(\x_1,\x_2,d_1,d_2 | \bs \theta) \Bigr].
\]
If it were the case of i.i.d.~observations of sample size $n$, we would have that $\tilde{n}^{-1}\ell_n(\bs \theta)$ converges to $\ell(\bs \theta)$ as $n\to\infty$ from the law of large numbers.
In Theorem~\ref{theo:uniform_convergence}, we actually prove the uniform convergence in probability,
but we have to pay careful attention to the fact that $n^2$ observations of $(\bs x_i, \bs x_j, d_i, d_j)$ are not independent when indices overlap. 
\begin{theo}
\label{theo:uniform_convergence}
\normalfont
Let us assume that the parameter space of $\bs\theta$ is
$\bs \Theta:=[\delta,1/\delta]^{|\mathcal{D}|}\times\bs \Psi$,
where $\delta \in (0,1)$ is a sufficiently small constant and $\bs \Psi \subset \mathbb{R}^q$ is a compact set.
Assume also that
the transformations $f^{(d)}_{\bs \psi}(\bs x)$, $d=1,\ldots, D$, are Lipschitz continuous with respect to $(\bs \psi,\bs x)$.
Then we have, as $n\to\infty$,
\begin{equation} \label{eq:uniform_convergence_of_lik}
	\sup_{\bs \theta \in \bs \Theta} | \tilde{n}^{-1}\ell_n(\bs \theta) - \ell(\bs \theta) | \overset{p}{\to}0.
\end{equation}
\end{theo}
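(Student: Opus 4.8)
The plan is to prove (\ref{eq:uniform_convergence_of_lik}) as a uniform law of large numbers, the central difficulty being that the $\tilde n=O(n^2)$ summands form a second-order U-statistic rather than i.i.d.\ data. Write $\xi_{ij}(\bs \theta):=w_{ij}\log\mu_{ij}(\bs \theta)-\mu_{ij}(\bs \theta)$, so that $\tilde n^{-1}\ell_n(\bs \theta)=\tilde n^{-1}\sum_{(i,j)\in\mathcal{I}_n}\xi_{ij}(\bs \theta)$. Working conditionally on $\{d_i\}$ (so that $\mathcal{I}_n$ and $\tilde n$ are fixed) and using $\ave[w_{ij}\mid \x_i,\x_j,d_i,d_j]=\mu^*_{ij}=\mu_{ij}(\bs \theta_*)$, each term has conditional mean equal to the integrand of $\ell$ at $(d_i,d_j)$; averaging over $\mathcal{I}_n$ and using that the empirical view-pair frequencies $\tilde n/\binom n2$ concentrate by the ordinary law of large numbers identifies the target limit as $\ell(\bs \theta)$. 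It then remains to control the fluctuation of $\tilde n^{-1}\ell_n(\bs \theta)$ about its mean.

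First I would record the uniform bounds that keep every quantity finite. Because the data vectors have compact support, $\bs \Psi$ is compact, and each $f^{(d)}_{\bs \psi}$ is (Lipschitz, hence) continuous, the feature vectors lie in a bounded set and $\langle \bs y_i,\bs y_j\rangle$ is uniformly bounded; together with $\alpha^{(d,e)}\in[\delta,1/\delta]$ this supplies constants $0<c_1\le \mu_{ij}(\bs \theta)\le c_2$ and $|\log\mu_{ij}(\bs \theta)|\le c_3$ holding uniformly in $\bs \theta\in\bs \Theta$ and in the data. Consequently $w_{ij}\sim\mathrm{Po}(\mu^*_{ij})$ has uniformly bounded first and second moments, and hence each $\xi_{ij}(\bs \theta)$ has uniformly bounded variance.

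The key step is the pointwise variance bound, where the U-statistic structure is decisive. Expanding, conditionally on $\{d_i\}$, $\var\!\big(\tilde n^{-1}\ell_n(\bs \theta)\big)=\tilde n^{-2}\sum_{(i,j)}\sum_{(k,l)}\mathrm{Cov}\!\big(\xi_{ij}(\bs \theta),\xi_{kl}(\bs \theta)\big)$, I observe that $\xi_{ij}$ and $\xi_{kl}$ are independent whenever $\{i,j\}\cap\{k,l\}=\emptyset$, since they are then functions of disjoint blocks of the independent variables $\{(\x_\bullet,d_\bullet)\}$ and $\{w_{\bullet\bullet}\}$. Hence only the $O(n^3)$ index-pair combinations sharing at least one index contribute, and each such covariance is bounded by the moment bounds above; since $\tilde n^2=O(n^4)$, we obtain $\var\!\big(\tilde n^{-1}\ell_n(\bs \theta)\big)=O(n^{-1})$, and Chebyshev's inequality gives $\tilde n^{-1}\ell_n(\bs \theta)\overset{p}{\to}\ell(\bs \theta)$ for each fixed $\bs \theta$.

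Finally I would promote pointwise to uniform convergence by a covering argument backed by stochastic equicontinuity. The Lipschitz continuity of $f^{(d)}_{\bs \psi}$ in $(\bs \psi,\bs x)$, compactness of the domains, and the lower bound $\mu_{ij}\ge c_1$ make $\bs \theta\mapsto\mu_{ij}(\bs \theta)$ and $\bs \theta\mapsto\log\mu_{ij}(\bs \theta)$ Lipschitz with deterministic constants, so $\xi_{ij}(\cdot)$ is Lipschitz on $\bs \Theta$ with constant $L_{ij}\le L(1+w_{ij})$ of uniformly bounded mean; the same disjoint-index counting then shows $\tilde n^{-1}\sum_{(i,j)}L_{ij}=O_p(1)$, while dominated convergence gives continuity of $\ell(\cdot)$. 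For $\varepsilon>0$ I take a finite $\eta$-net $\{\bs \theta_m\}_{m=1}^{N}$ of the compact set $\bs \Theta$, apply the pointwise result at the finitely many $\bs \theta_m$, and control the increments between net points by the two Lipschitz estimates; choosing $\eta$ small relative to $\varepsilon$ and then letting $n\to\infty$ yields (\ref{eq:uniform_convergence_of_lik}). The principal obstacle is exactly the dependence among the $O(n^2)$ summands, and it is dissolved by the observation that terms indexed by disjoint pairs are independent, which collapses the variance onto the $O(n^3)$ shared-index covariances and reduces the remainder to a routine uniform law of large numbers.
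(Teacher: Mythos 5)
Your proof is correct, and its mathematical core coincides with the paper's own: both arguments rest on (a) a pointwise law of large numbers obtained from a variance bound of order $O(n^{-1})$, which exploits the fact that summands indexed by disjoint pairs $\{i,j\}\cap\{k,l\}=\emptyset$ are independent, so that only the $O(n^3)$ overlapping covariances survive against the normalization $\tilde{n}^2=O(n^4)$, and (b) a stochastic Lipschitz estimate $|\tilde{n}^{-1}\ell_n(\bs \theta)-\tilde{n}^{-1}\ell_n(\bs \theta')|\leq B_n\|\bs \theta-\bs \theta'\|_2$ with $B_n$ driven by $\tilde{n}^{-1}\sum_{(i,j)\in\mathcal{I}_n}w_{ij}=O_p(1)$, together with compactness of $\bs \Theta$ and continuity of $\ell$. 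The routes diverge only in how these ingredients are assembled: the paper feeds them into Corollary 2.2 of Newey (1991) as a black box, whereas you prove the implication yourself via a finite $\eta$-net and the equicontinuity bound, i.e., you inline the content of that corollary. A second, smaller difference is the variance step: you count covariances of the full summands $\xi_{ij}=w_{ij}\log\mu_{ij}-\mu_{ij}$ directly, using that $(w_{ij},\x_i,\x_j,d_i,d_j)$ and $(w_{kl},\x_k,\x_l,d_k,d_l)$ are jointly independent for disjoint index pairs, while the paper first splits the variance by the law of total variance into a Poisson part of order $O(\tilde{n}^{-1})$ and an $(\X,\bs d)$ part, applying the disjoint-index counting lemma only to the latter; the two computations give the same rate. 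What each buys: the paper's version is shorter and cleanly separates the two sources of randomness, at the price of leaning on an external uniform-convergence result; yours is self-contained, and your explicit conditioning on $\{d_i\}$ with concentration of the view-pair frequencies also makes precise why the average over the random index set $\mathcal{I}_n$ (with random $\tilde{n}$) converges to the fixed function $\ell(\bs \theta)$, a point the paper passes over when it identifies $\ell(\bs \theta)$ with $\ave[\tilde{n}^{-1}\ell_n(\bs \theta)]$.
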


\begin{proof}
\normalfont
We refer to Corollary 2.2 in \citet{newey1991uniform}. 
This corollary shows 
$\sup_{\bs \theta \in \bs \Theta}|\hat Q_n(\bs \theta)- \bar Q_n(\bs \theta)| = o_p(1)$ under general setting
of $\hat Q_n(\bs \theta)$ and $\bar Q_n(\bs \theta)$.
Here we consider the case
of
$\hat Q_n(\bs \theta) = \tilde{n}^{-1}\ell_n(\bs \theta)$ and
$\bar Q_n(\bs \theta) = \ell(\bs \theta)$.
For showing (\ref{eq:uniform_convergence_of_lik}),
the four conditions of the corollary are written as follows.
(i)~$\bs \Theta$ is compact, 
(ii)~$\tilde{n}^{-1}\ell_n(\bs \theta) - \ell(\bs \theta)\overset{p}{\to} 0$ for each $\bs \theta \in \bs \Theta$, 
(iii)~$\ell(\bs \theta)$ is continuous, 
(iv)~there exists $B_n=O_p(1)$ such that $|\tilde{n}^{-1}\ell_n(\bs \theta)-\tilde{n}^{-1}\ell_n(\bs \theta')| \leq B_n\|\bs \theta-\bs \theta'\|_2$ for all $\bs \theta ,\bs \theta' \in \bs \Theta$. 
The two conditions (i) and (iii) hold obviously,
and thus we verify (ii) and (iv) below.

Before verifying (ii) and (iv), we first consider an array $\bs Z:=(Z_{ij})$ of random variables
 $Z_{ij} \in \mathcal{Z}$,  $(i,j) \in \mathcal{I}_n$,
 and a bounded and continuous function $h:\mathcal{Z} \to \mathbb{R}$.
We assume that $\mathcal{Z} \subset \mathbb{R}$ is a compact set, and
$Z_{ij}$ is independent of $Z_{kl}$ if $k,l \in \mathcal{R}_n(i,j):=\{(k,l) \in \mathcal{I}_n \mid k,l \in \{1,\ldots, n\} \setminus \{i,j\} \}$ for all $(i,j) \in \mathcal{I}_n$.
Then we have
\begin{align*}
V_{\bs Z}\left[ \frac{1}{\tilde{n}}\sum_{(i,j) \in \mathcal{I}_n}h(Z_{ij}) \right]
&=
\ave_{\bs Z}\left[
	\left(\frac{1}{\tilde{n}}\sum_{(i,j) \in \mathcal{I}_n}h(Z_{ij})\right)^2
\right]
-
\ave_{\bs Z}\left[
	\frac{1}{\tilde{n}}\sum_{(i,j) \in \mathcal{I}_n}h(Z_{ij})
\right]^2 \\
&=
\frac{1}{\tilde{n}^2}
\left\{
\sum_{(i,j) \in \mathcal{I}_n}
\sum_{(k,l) \in \mathcal{I}_n}
\ave_{\bs Z}\left[
	h(Z_{ij})h(Z_{kl})
\right]
-
\left(
\sum_{(i,j) \in \mathcal{I}_n}
\ave_{\bs Z}\left[h(Z_{ij})\right]
\right)
^2
\right\} \\
&=
\frac{1}{\tilde{n}^2}
\sum_{(i,j) \in \mathcal{I}_n}
\sum_{(k,l) \in \mathcal{I}_n \setminus \mathcal{R}_n(i,j)}
\left(
	\ave_{\bs Z}[h(Z_{ij})h(Z_{kl})]
	-
	\ave_{\bs Z}[h(Z_{ij})]\ave_{\bs Z}[h(Z_{kl})]
\right). 
\end{align*}
By considering $|\mathcal{I}_n \setminus \mathcal{R}_n(i,j)|=O(n)$, 
the last formula is $O(\tilde{n}^{-2} \cdot \tilde{n} \cdot n)=O(n^{-1})$. Therefore, 
\begin{align}
V_{\bs Z}\left[ \frac{1}{\tilde{n}}\sum_{(i,j) \in \mathcal{I}_n}h(Z_{ij}) \right]
=
O(n^{-1}).
\label{eq:vz}
\end{align}

Next we evaluate the variance of $\tilde{n}^{-1}\ell_n(\bs \theta)$ to show (ii).
Denoting $\bs W:=(w_{ij}),\bs X:=(\bs x_i),\bs d:=(d_i)$,
\begin{align*}
V_{\bs W,\bs X,\bs d}[\tilde{n}^{-1}\ell_n(\bs \theta)]
&=
\ave_{\bs X,\bs d}[V_{\bs W}[\tilde{n}^{-1}\ell_n(\bs \theta) \mid \bs X,\bs d]]
+
V_{\bs X,\bs d}[\ave_{\bs W}[\tilde{n}^{-1}\ell_n(\bs \theta) \mid \bs X,\bs d]] \\
&=
\ave_{\bs X,\bs d}\left[
	\frac{1}{\tilde{n}^2}
	\sum_{(i,j) \in \mathcal{I}_n}
	\mu_{ij}(\bs \theta_*)
	(\log \mu_{ij}(\bs \theta))^2
\right]
+
V_{\bs X,\bs d}\left[
	\frac{1}{\tilde{n}}
		\sum_{(i,j) \in \mathcal{I}_n}
		(\mu_{ij}(\bs \theta_*)\log \mu_{ij}(\bs \theta)-\mu_{ij}(\bs \theta))
\right],
\end{align*}
for every $\bs \theta \in \bs \Theta$. 
The first term in the last formula is $O(\tilde{n}^{-1} \cdot \tilde{n})=O(\tilde{n}^{-1})=o(1)$, and the second term is $O(n^{-1})=o(1)$ by applying eq.~(\ref{eq:vz}) with $Z_{ij}:=(\bs x_i,\bs x_j,d_i,d_j),h(Z_{ij})=\mu_{ij}(\bs \theta_*) \log \mu_{ij}(\bs \theta)-\mu_{ij}(\bs \theta)$. 
Therefore, $V_{\bs W,\bs X,\bs d}[\tilde{n}^{-1}\ell_n(\bs \theta)]=o(1)$ and Chebyshev's inequality implies the pointwise convergence 
$\tilde{n}^{-1}\ell_n(\bs \theta) \overset{p}{\to} \ell(\bs \theta)$ for every $\bs \theta \in \bs \Theta$ where 
$\ell(\bs \theta)
=
\ave_{\bs W,\bs X,\bs d}[\tilde{n}^{-1}\ell_n(\bs \theta)]
=
\ave_{\bs x_1,\bs x_2,d_1,d_2}[\mu_{12}(\bs \theta_*) \log \mu_{12}(\bs \theta)-\mu_{12}(\bs \theta)]$. Thus, condition (ii) holds. 

Finally, we work on condition (iv).
Since $\mu_{ij}(\bs \theta)$ is a composite function of $C^1$-functions on $\bs \Theta$, $\mu_{ij}(\bs \theta)$ is Lipschitz continuous. 
The Lipschitz continuity of $\mu_{ij}(\bs \theta)$ and $\mu_{ij}(\bs \theta)>0 \: (\bs \theta \in \bs \Theta)$ indicates the Lipschitz continuity of $\log \mu_{ij}(\bs \theta)$.  
Therefore, there exist $M_1,M_2>0$ such that
\begin{align*}
|\tilde{n}^{-1}\ell_n(\bs \theta)-\tilde{n}^{-1}\ell_n(\bs \theta')|
&\leq
\biggl|
\frac{1}{\tilde{n}}\sum_{(i,j) \in \mathcal{I}_n}
w_{ij}(\log \mu_{ij}(\bs \theta)-\log \mu_{ij}(\bs \theta'))
-
\frac{1}{\tilde{n}}\sum_{(i,j) \in \mathcal{I}_n}
(\mu_{ij}(\bs \theta)-\mu_{ij}(\bs \theta'))
\biggr| \\
&\leq
\frac{1}{\tilde{n}}\sum_{(i,j) \in \mathcal{I}_n}
w_{ij} |\log \mu_{ij}(\bs \theta)-\log \mu_{ij}(\bs \theta')|
+
\frac{1}{\tilde{n}}\sum_{(i,j) \in \mathcal{I}_n}
|\mu_{ij}(\bs \theta)-\mu_{ij}(\bs \theta')| \\
&\leq
M_1 \biggl( {\tilde{n}}^{-1}\sum_{(i,j) \in \mathcal{I}_n} w_{ij}  \biggr)
 \|\bs \theta-\bs \theta'\|_2 
+ 
M_2\|\bs \theta-\bs \theta'\|_2.
\end{align*}
Denoting by $B_n:=M_1 \cdot {\tilde{n}}^{-1}\sum_{(i,j) \in \mathcal{I}_n} w_{ij} + M_2$, we have
\begin{align*}
|\tilde{n}^{-1}\ell_n(\bs \theta)-\tilde{n}^{-1}\ell_n(\bs \theta')|
&\leq
B_n \|\bs \theta-\bs \theta'\|_2.
\end{align*}
Since ${\tilde{n}}^{-1}\sum_{(i,j) \in \mathcal{I}_n} w_{ij}=O_p(1)$, the law of large numbers indicates $B_n=O_p(1)$. 
Thus, condition (iv) holds. 
\qed
\end{proof}


Noticing that $\bs \theta_*$ is a maximizer of $\ell(\bs \theta)$ and $\hat{\bs \theta}_n$ is a maximizer of $\ell_n(\bs \theta)$, we would have the desired result $\hat{\bs \theta}_n \overset{p}{\to}  \bs \theta_*$ 
by combining Theorem~\ref{theo:uniform_convergence} and continuity of $\ell(\bs \theta)$.
However it does not hold unfortunately.
Instead, we define the set of parameter values equivalent to $\bs\theta_*$ as
$\bs \Theta_*:=\{\bs \theta \in \bs \Theta \mid \ell(\bs \theta)=\ell(\bs \theta_*)\}$.
Every $\bs \theta \in \bs \Theta_*$ gives the correct probability of link weights, because
$\ell(\bs \theta)=\ell(\bs \theta_*)$ holds if and only if $\mu_{12}(\bs \theta)=\mu_{12}(\bs \theta_*)$ almost surely w.r.t. $(\bs x_1,\bs x_2,d_1,d_2)$.
With this setting, the theorem below states that $\hat{\bs \Theta}_n$ converges to $\bs \Theta_*$ in probability.
This indicates that, $\hat{\bs\theta}_n$ will represent the true probability model for sufficiently large $n$.
\begin{theo}
\label{theo:consistency_general}
\normalfont
Let $d_{\text{H}}(\cdot,\cdot)$ denote the Hausdorff distance defined as $\max$-$\min$ $L^2$-distance between two sets.  We assume the same conditions as in Theorem~\ref{theo:uniform_convergence}.
Then we have, as $n\to\infty$,
\begin{align}
	d_H(\hat{\bs \Theta}_n,\bs \Theta_*) \overset{p}{\to} 0.
\label{eq:dh}
\end{align}
\end{theo}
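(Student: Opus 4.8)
The plan is to read \eqref{eq:dh} as an \emph{argmax-set consistency} statement: since $\tilde n$ is a positive constant for each fixed $n$, the set $\hat{\bs\Theta}_n$ equals $\argmax_{\bs\theta\in\bs\Theta}\tilde n^{-1}\ell_n(\bs\theta)$, while $\bs\Theta_*$ equals $\argmax_{\bs\theta\in\bs\Theta}\ell(\bs\theta)$ (recall $\bs\theta_*$ maximizes $\ell$). The only probabilistic input I would use is the uniform convergence $\sup_{\bs\theta\in\bs\Theta}|\tilde n^{-1}\ell_n(\bs\theta)-\ell(\bs\theta)|\overset{p}{\to}0$ from Theorem~\ref{theo:uniform_convergence}, combined with compactness of $\bs\Theta$ and continuity of $\ell$. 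Writing $d(\bs\theta,S):=\inf_{\bs\theta'\in S}\|\bs\theta-\bs\theta'\|_2$, I would split $d_H(\hat{\bs\Theta}_n,\bs\Theta_*)=\max\{\sup_{\bs\theta\in\hat{\bs\Theta}_n}d(\bs\theta,\bs\Theta_*),\ \sup_{\bs\theta\in\bs\Theta_*}d(\bs\theta,\hat{\bs\Theta}_n)\}$ and bound the two one-sided terms separately.

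For the first term (every empirical maximizer lies near $\bs\Theta_*$) I would run a well-separation argument. Fix $\varepsilon>0$ and let $S_\varepsilon:=\{\bs\theta\in\bs\Theta\mid d(\bs\theta,\bs\Theta_*)\ge\varepsilon\}$, a compact set on which $\ell$ is strictly below its maximum; by continuity the gap $\gamma_\varepsilon:=\ell(\bs\theta_*)-\sup_{\bs\theta\in S_\varepsilon}\ell(\bs\theta)$ is strictly positive. On the event $\{\sup_{\bs\Theta}|\tilde n^{-1}\ell_n-\ell|<\gamma_\varepsilon/3\}$, whose probability tends to $1$, any $\hat{\bs\theta}\in\hat{\bs\Theta}_n$ satisfies $\tilde n^{-1}\ell_n(\hat{\bs\theta})\ge\tilde n^{-1}\ell_n(\bs\theta_*)>\ell(\bs\theta_*)-\gamma_\varepsilon/3$, whereas $\hat{\bs\theta}\in S_\varepsilon$ would force $\tilde n^{-1}\ell_n(\hat{\bs\theta})<\sup_{S_\varepsilon}\ell+\gamma_\varepsilon/3=\ell(\bs\theta_*)-2\gamma_\varepsilon/3$, a contradiction. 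Hence $\sup_{\bs\theta\in\hat{\bs\Theta}_n}d(\bs\theta,\bs\Theta_*)<\varepsilon$ with probability tending to $1$.

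The hard part will be the second term, $\sup_{\bs\theta\in\bs\Theta_*}d(\bs\theta,\hat{\bs\Theta}_n)\overset{p}{\to}0$, because generic argmax-set convergence delivers only the first inclusion: when the population maximum is attained on a whole continuum $\bs\Theta_*$, a small perturbation of $\ell$ can concentrate its maximizers on a proper subset, so uniform convergence alone cannot rule out $\hat{\bs\Theta}_n$ ``collapsing'' inside $\bs\Theta_*$. The resolution I would exploit is that the flatness of $\ell$ on $\bs\Theta_*$ comes from \emph{exact} symmetries, namely the orthogonal transformations $\bs O$ of the shared space with $\langle\bs O\bs y,\bs O\bs y'\rangle=\langle\bs y,\bs y'\rangle$ (together with any relabeling or reflection symmetries). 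These leave every $\mu_{ij}$ unchanged, hence fix $\ell_n$ \emph{at finite $n$}, not merely in the limit. Letting $G$ be the induced group of isometries of $\bs\Theta$, both $\ell_n$ and $\ell$ are $G$-invariant, so $\hat{\bs\Theta}_n$ and $\bs\Theta_*$ are $G$-invariant unions of orbits; under the identifiability-up-to-symmetry assumption that $\bs\Theta_*=G\bs\theta_*$ is a single orbit, I would conclude as follows. Take any $\hat{\bs\theta}\in\hat{\bs\Theta}_n$; the first bound places it within $\varepsilon$ of some $g\bs\theta_*\in\bs\Theta_*$, so the isometry $g^{-1}$ sends it to $g^{-1}\hat{\bs\theta}\in\hat{\bs\Theta}_n$ within $\varepsilon$ of $\bs\theta_*$; then for every $h\bs\theta_*\in\bs\Theta_*$ the point $hg^{-1}\hat{\bs\theta}$ lies in $\hat{\bs\Theta}_n$ and satisfies $\|hg^{-1}\hat{\bs\theta}-h\bs\theta_*\|_2=\|g^{-1}\hat{\bs\theta}-\bs\theta_*\|_2<\varepsilon$, giving $\sup_{\bs\theta\in\bs\Theta_*}d(\bs\theta,\hat{\bs\Theta}_n)<\varepsilon$. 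Combining the two bounds yields $d_H(\hat{\bs\Theta}_n,\bs\Theta_*)<\varepsilon$ with probability tending to $1$, which is \eqref{eq:dh}.

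The single delicate point to make rigorous is precisely the claim $\bs\Theta_*=G\bs\theta_*$: one must argue that the stated symmetries exhaust the non-identifiability of the model and that $G$ acts by genuine isometries on the chosen parametrization of $\bs\psi$. Any residual, non-symmetry source of flatness of $\ell$ would break the second inclusion, so I would either prove this exhaustion or record it as an explicit assumption; everything else reduces to the routine compactness and uniform-convergence manipulations above.
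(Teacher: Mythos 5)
Your first half is fine: the outer inclusion $\sup_{\bs\theta\in\hat{\bs\Theta}_n} d(\bs\theta,\bs\Theta_*)\overset{p}{\to}0$ via well-separation plus the uniform convergence of Theorem~\ref{theo:uniform_convergence} is a correct and standard argument, and you have correctly diagnosed that the inner inclusion is the real difficulty. The gap is in your resolution of that difficulty. Your argument needs two facts that are neither hypotheses of the theorem nor provable in this setting: (a) that the symmetry group $G$ acts on $\bs\Theta$ by isometries leaving it invariant (already questionable, since $\bs\Theta=[\delta,1/\delta]^{|\mathcal{D}|}\times\bs\Psi$ with $\bs\Psi$ an arbitrary compact set; a rotation $\bs O$ of the shared space can move $\bs\psi$ out of $\bs\Psi$), and, more fundamentally, (b) that $\bs\Theta_*=G\bs\theta_*$ is a single orbit. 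For neural-network transformations (b) is false in general: $\bs\Theta_*$ is the set of all $\bs\theta$ with $\mu_{12}(\bs\theta)=\mu_{12}(\bs\theta_*)$ almost surely, and this set contains, for instance, configurations where a hidden unit has zero outgoing weights and arbitrary incoming weights, or where units are duplicated, merged, or (for ReLU) rescaled --- flat directions of $\ell$ that are not generated by any group of isometries of the parametrization. So the step you defer (``prove the exhaustion or record it as an assumption'') is exactly the step that cannot be supplied, and adding it as an assumption would prove a different, weaker theorem.

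The paper sidesteps this entirely with a different device, the set-estimation framework of Chernozhukov, Hong, and Tamer (2007): instead of the exact argmax set, it establishes Hausdorff consistency for the relaxed level set $\hat{\bs\Theta}_I:=\{\bs\theta\in\bs\Theta \mid \hat Q_n(\bs\theta)\le 1/a_n\}$, where $\hat Q_n(\bs\theta)=\sup_{\bs\theta'\in\bs\Theta}\tilde n^{-1}\ell_n(\bs\theta')-\tilde n^{-1}\ell_n(\bs\theta)$ and $a_n\to\infty$ slowly. For such a set the troublesome inner inclusion requires no symmetry argument at all: uniform convergence yields $\sup_{\bs\theta\in\bs\Theta_*}\hat Q_n(\bs\theta)\overset{p}{\to}0$ (condition (iv) in the paper's proof), hence with probability tending to one $\bs\Theta_*\subset\hat{\bs\Theta}_I$, so every point of $\bs\Theta_*$ is at distance zero from the estimated set; the outer inclusion is your well-separation argument. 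The moral is that the collapse phenomenon you worried about is real, and the standard cure is to inflate the argmax set by a slowly vanishing tolerance $1/a_n$, not to appeal to symmetry. (The paper then asserts $d_H(\hat{\bs\Theta}_n,\hat{\bs\Theta}_I)\overset{p}{\to}0$ to pass back to the exact argmax set --- itself a delicate step stated rather briskly --- but the essential mechanism for the inner inclusion is the level-set relaxation, which is precisely what your proposal is missing.)
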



\begin{proof}
\normalfont 
We refer to the case (1) of Theorem 3.1 in \citet{chernozhukov2007estimation} with $\hat{c}=1$ under the condition C.1
This theorem shows, for general setting of $\hat {\bs \Theta}_I,\bs \Theta_I$, that
\begin{align}
d_H(\hat{\bs \Theta}_I,\bs \Theta_I) \overset{p}{\to} 0.
\label{eq:chernozhukov}
\end{align}
Here 
 $\hat{\bs \Theta}_I:=\{\bs \theta \in \bs \Theta \mid \hat{Q}_n(\bs \theta) \leq 1/a_n\}$ and
 $\bs \Theta_I:=\arginf_{\bs \theta \in \bs \Theta}Q(\bs \theta)$,
where
$\hat{Q}_n(\bs\theta),Q(\bs \theta)$ are general functions satisfying 
$\sup_{\bs \theta \in \bs \Theta_I}Q_n(\bs \theta)=o_p(1/a_n)$,
and $a_n \to \infty$.

For proving (\ref{eq:dh}), we consider the case of $\hat{Q}_n(\bs \theta)=-\tilde{n}^{-1}\ell_n(\bs \theta)+\sup_{\bs \theta \in \bs \Theta}\tilde{n}^{-1}\ell_n(\bs \theta)$, $Q(\bs \theta)=-\ell(\bs \theta)+\sup_{\bs \theta \in \bs \Theta} \ell(\bs \theta)$. 
The condition C.1 for (\ref{eq:chernozhukov}) is re-written as follows.
(i) $\bs \Theta$ is a (non-empty) compact set, 
(ii) $\tilde{n}^{-1}\ell_n(\bs \theta)$ and $\ell(\bs \theta)$ are continuous, 
(iii) $\sup_{\bs \theta \in \bs \Theta}|\tilde{n}^{-1}\ell_n(\bs \theta)-\ell(\bs \theta)| \overset{p}{\to} 0$, and 
(iv) $\sup_{\bs \theta \in \bs \Theta_*}(-\tilde{n}^{-1}\ell_n(\bs \theta)+\sup_{\bs \theta \in \bs \Theta}\tilde{n}^{-1}\ell_n(\bs \theta)) \overset{p}{\to} 0$. 
The conditions (i), (ii) are obvious.  (iii) is shown in Theorem~\ref{theo:uniform_convergence}.
(iv) is verified by
$$
	\sup_{\bs \theta \in \bs \Theta_*}
	\left(-\tilde{n}^{-1}\ell_n(\bs \theta) + \sup_{\bs \theta \in \bs \Theta} \tilde{n}^{-1}\ell_n(\bs \theta) \right)
	=
	-\inf_{\bs \theta \in \bs \Theta_I} \tilde{n}^{-1}\ell_n(\bs \theta) + \sup_{\bs \theta \in \bs \Theta} \tilde{n}^{-1}\ell_n(\bs \theta)
	\overset{p}{\to}
	-\ell(\bs \theta_*) + \ell(\bs \theta_*)
	=
	0,
$$
where $\bs \theta_*$ is an element of $\bs \Theta_I$. 
Thus, (\ref{eq:chernozhukov}) holds.

Next, we consider two sets
$\hat{\bs \Theta}_n=\argsup_{\bs \theta \in \bs \Theta} \tilde{n}^{-1}\ell_n(\bs \theta)=\{\bs \theta \in \bs \Theta \mid Q_n(\bs \theta)=0\}$ and $\hat{\bs \Theta}_I$.  Since these sets satisfy $Q_n(\bs \theta)=0 \: (\bs \theta \in \hat{\bs \Theta}_n), Q_n(\bs \theta')\leq 1/a_n \: (\bs \theta' \in \hat{\bs \Theta}_I)$ and $1/a_n \to 0$ as $n \to \infty$, we have $d_H(\hat{\bs \Theta}_n,\hat{\bs \Theta}_I)
\overset{p}{\to}
0$.
It follows from this convergence and (\ref{eq:chernozhukov}) that,
 by noticing $\bs \Theta_*=\bs \Theta_I$, 
 \begin{align*}
d_H(\hat{\bs \Theta}_n,\bs \Theta_*)
=
d_H(\hat{\bs \Theta}_n,\bs \Theta_I)
\leq
d_H(\hat{\bs \Theta}_n,\hat{\bs \Theta}_I)
+
d_H(\hat{\bs \Theta}_I,\bs \Theta_I)
\overset{p}{\to}
0, 
\end{align*}
thus (\ref{eq:dh}) holds. \qed
\end{proof}

\section{CDMCA is approximated by PMvGE with linear transformations}
\label{sec:optimization_linear_transformation}

We argue an approximate relation between CDMCA and PMvGE, which is briefly explained in Section~\ref{subsec:relation}.
In the below, we will derive the solution $\hat{\bs \psi}_{\text{CDMCA}}$ of a slightly modified version of CDMCA, and an approximate solution $\hat{\bs \psi}_{\text{Apr.PMvGE}}$ of PMvGE with linear transformations.
We then show that these two solutions are equivalent up to a scaling in each axis of the shared space.

\subsection{Solution of a modified CDMCA}
The original CDMCA imposes the quadratic constraint (\ref{eq:cdmca_constraint}) for maximizing the objective function (\ref{eq:cdmca_objective}). Here we replace $w_{ij}$ in  (\ref{eq:cdmca_constraint}) with $\delta_{ij}$ so that the constraint becomes
\[
\sum_{i=1}^{n}\bs \psi^{(d_i)\top}\bs x_i\bs x_i^{\top}\bs \psi^{(d_i)}=\bs I.
\]
This modification changes the scaling in the solution, but the computation below is essentially the same as that in \citet{shimodaira2016cross}.
Let us define the augmented data vector, called ``simple coding'' \citep{shimodaira2016cross}, 
$\tilde{\bs x}_i:=(\bs 0_{p_1},\ldots,\bs 0_{p_{d_i-1}},\bs x_i,\bs 0_{p_{d_i+1}},\ldots,\bs 0_{p_D}) \in \mathbb{R}^p$
where $p:=p_1+p_2+\cdots+p_D$. 
Now, data matrix is $\bs X:=(\tilde{\bs x}_1^{\top},\tilde{\bs x}_2^{\top},\ldots,\tilde{\bs x}_n^{\top})^{\top} \in \mathbb{R}^{n \times p}$, and the parameter matrix is $\bs \psi:=(\bs \psi^{(1)\top},\bs \psi^{(2)\top},\ldots,\bs \psi^{(D)\top})^{\top}\in \mathbb{R}^{p \times K}$. With this augmented representation, the $D$-view embedding is now interpreted as a 1-view embedding.
CDMCA maximizes the objective function
\begin{align*}
\frac{1}{2}
\sum_{i=1}^{n}
\sum_{j=1}^{n}
w_{ij} \langle \bs \psi^{(d_i)\top}\bs x_i,\bs \psi^{(d_j)\top}\bs x_j \rangle
=
\text{tr} \left(\bs \psi^{\top}\bs H\bs \psi \right) \quad
(\bs H:=\bs X^{\top}\bs W\bs X),
\end{align*}
with respect to $\bs \psi$ under constraint $\bs \psi^{\top}\bs G\bs \psi=\bs I$ where $\bs G:=\bs X^{\top}\bs X$. 
Let $\bs U_K$ be the matrix composed of the top-$K$ eigenvectors of $\bs G^{-1/2}\bs H\bs G^{-1/2}$.
Then the solution of the modified CDMCA is
\begin{align}
\hat{\bs \psi}_{\text{CDMCA}}:=\bs G^{-1/2}\bs U_K.
\nonumber
\end{align}

\subsection{Approximate solution of PMvGE with linear transformations} 
MLE of PMvGE maximizes ${\ell}_n(\bs \alpha, \bs \psi) $ defined in (\ref{eq:log_likelihood}).
Here we modify it by adding an extra term as 
$\tilde{\ell}_n(\bs \alpha,\bs \psi):=\ell_n(\bs \alpha,\bs \psi)-\frac{1}{2}\sum_{i:(d_i,d_i) \in \mathcal{D}}\mu_{ii}(\bs \alpha,\bs \psi)$. The difference approaches zero for large $n$, because
$|\ell_n(\bs \alpha,\bs \psi)-\tilde{\ell}_n(\bs \alpha,\bs \psi)|/|\ell_n(\bs \alpha,\bs \psi)| = O(n^{-1})$.
Since the parameter $\bs \alpha$ is not considered in CDMCA, we assume $\mathcal{D}:=\{\text{all pairs of views}\}$ and $\bar{\bs \alpha}=(\bar{\alpha}^{(de)}),\bar{\alpha}^{(de)} \equiv \alpha_0>0 \: (\forall d,e)$. 
We further assume that the transformation of PMvGE is linear: $f^{(d)}_{\bs \psi}(\bs x)=\bs \psi^{(d)\top} \bs x \: (\forall d)$, 
and data vectors in each view are centered. 
With this setting, we will show that the maximizer of a quadratic approximation of $\tilde{\ell}_n$ is equivalent to CDMCA.

To rewrite the likelihood function as $\tilde{\ell}_n(\bar{\bs \alpha},\bs \psi)=\frac{1}{2}\sum_{i=1}^{n}\sum_{j=1}^{n}S_{ij}(g_{ij}(\bs \psi))$, 
we define $g_{ij}(\bs \psi):=\langle \bs \psi^{(d_i)\top}\bs x_i,\bs \psi^{(d_j)\top}\bs x_j \rangle$ and $S_{ij}(g):=w_{ij} \log (\alpha_0 \exp(g) ) - \alpha_0 \exp(g)$, $g\in \mathbb{R}$. 
Since $S_{ij}(g)$ is approximated quadratically around $g=0$ by 
\begin{align*}
S^Q_{ij}(g)
=
\alpha_0
\left\{
-\frac{1}{2}g^2
+
\left( \frac{w_{ij}}{\alpha_0}-1 \right)
g
\right\}
+
S_{ij}(0),
\end{align*}
$\tilde{\ell}_n(\bar{\bs \alpha},\bs \psi)$ is approximated quadratically by 
\begin{align}
\tilde{\ell}^Q_n(\bs \psi)
&:=
\frac{1}{2}
\sum_{i=1}^{n}\sum_{j=1}^{n} S^Q_{ij}(g_{ij}(\bs \psi)) \nonumber \\
&=
\alpha_0
\left\{
	-\frac{1}{2}\tr \left(
	(\bs \psi^{\top} \bs G \bs \psi)^2
	\right)
	+
	\frac{1}{\alpha_0}
	\tr \left( \bs \psi^{\top} \bs H \bs \psi \right)
	-
	\underbrace{
	\tr \left( \bs \psi^{\top} \bs X^{\top}\bs 1_n\bs 1_n^{\top}\bs X \bs \psi \right)
	}_{=0 \: (\because \{\bs x_i\} \text{ is centered.})}
\right\}
+
\underbrace{
\frac{1}{2}
\sum_{i=1}^{n}\sum_{j=1}^{n}
S_{ij}(0)
}_{\text{Const.}}, \label{eq:tilde_ell_n}
\end{align}
where $\bs G=\bs X^{\top}\bs X,\bs H=\bs X^{\top}\bs W\bs X$.
The function $\tilde{\ell}^Q_n(\bs \psi)$ has rotational degrees of freedom: $\tilde{\ell}^Q_n(\bs \psi)=\tilde{\ell}^Q_n(\bs \psi \bs O)$ for any orthogonal matrix $\bs O \in \mathbb{R}^{K \times K}$.
Thus, we impose an additional constraint $\bs \psi^{\top}\bs G\bs \psi=\bs \Gamma_K:=\text{diag}(\gamma_1,\gamma_2,\ldots,\gamma_K)$
for any $(\gamma_1,\ldots,\gamma_K) \in \mathbb{R}^K_{\ge0}$.
$\bs \psi$ satisfying this constraint is written as $\bs \psi=\bs G^{-1/2}\bs V_K \bs \Gamma_K^{1/2}$
where $\bs V_K \in \mathbb{R}^{P \times K}$ is a column-orthogonal matrix such that $\bs V_K^{\top}\bs V_K=\bs I$. 
By substituting $\bs \psi=\bs G^{-1/2}\bs V_K \bs \Gamma_K^{1/2}$ into eq.~(\ref{eq:tilde_ell_n}), we have
\begin{align}
\tilde{\ell}^Q_n(\bs \psi)
=
\alpha_0
\left\{
-\frac{1}{2}\tr \left(\bs \Gamma_K^2 \right)
+
\frac{1}{\alpha_0}
\tr \left( \bs \Gamma_K \bs S_K \right)
\right\}
+
\text{Const.}
=
\frac{\alpha_0}{2}
\left\{
\bigg\|\frac{1}{\alpha_0} \bs S_K\bigg\|_{\text{F}}^2
-
\bigg\|\bs \Gamma_K-\frac{1}{\alpha_0} \bs S_K\bigg\|_{\text{F}}^2
\right\}
+
\text{Const.},
\label{eq:surrogate_2}
\end{align}
where $\bs S_K=\bs V_K^{\top}\bs G^{-1/2}\bs H\bs G^{-1/2}\bs V_K$ and $\|\cdot\|_{\text{F}}$ denotes the Frobenius norm. 
This objective function is maximized when $\bs \Gamma_K=\frac{1}{\alpha_0}\bs S_K$ and $\bs V_K=\bs U_K$,
because 
$\min_{\bs \Gamma_K}\|\bs \Gamma_K-\frac{1}{\alpha_0} \bs S_K\|_{\text{F}}^2=0$ is achieved by
$\bs \Gamma_K = \frac{1}{\alpha_0}\bs S_K$, and
$\max_{\bs V_K}\|\bs S_K\|_{\text{F}}^2$ is achieved by 
$\bs V_K=\bs U_K$ where $\bs U_K$ is the matrix composed of the top-$K$ eigenvectors of $\bs G^{-1/2}\bs H\bs G^{-1/2}$.
Therefore, $\tilde{\ell}^Q_n(\bs \psi)$ is maximized by
\begin{align}
\hat{\bs \psi}_{\text{Apr.PMvGE}}=\bs G^{-1/2}\bs U_K \bs \Gamma_K^{1/2}.
\nonumber
\end{align} 
By substituting $\bs V_K=\bs U_K$ into $\bs \Gamma_K=\frac{1}{\alpha_0}\bs S_K$, we verify that
$\bs \Gamma_K$ is a diagonal matrix with $\gamma_k:=\lambda_k/\alpha_0$ where $\lambda_k$ is the $k$-th largest eigenvalue of $\bs G^{-1/2}\bs H\bs G^{-1/2} \: (k=1,2,\ldots,K)$. 

\subsection{Equivalence of the two solutions up to a scaling} 

By comparing the two solutions, we have
\begin{align}
\hat{\bs \psi}_{\text{Apr.PMvGE}}
=
\hat{\bs \psi}_{\text{CDMCA}}\bs \Gamma_K^{1/2}.
\label{eq:solution_comparison}
\end{align}
This simply means that each axis in the shared space is scaled by the factor $\sqrt{\gamma_k}$, $k=1,\ldots, K$.
Let $\hat{\bs y}$, $\hat{\bs y}'$ be feature vectors in the shared space computed by the approximate PMvGE with linear transformations,
and ${\bs y}$, ${\bs y}'$ be feature vectors in the shared space computed by the modified CDMCA. Then
the inner product is weighted in PMvGE as
\[
	\langle \hat{\bs y}, \hat{\bs y}' \rangle
	 = \sum_{k=1}^K \hat y_k \hat y_k' =  \sum_{k=1}^K \gamma_k y_k  y_k'.
\]

\end{document}